\newtheorem{theorem}{Theorem}
\newtheorem{definition}{Definition}
\begin{document}
%
\title{Local Distribution in Neighborhood for Classification}
%
%
%
%

\author{Chengsheng Mao$^*$, Bin Hu, Lei Chen, Philip Moore and Xiaowei Zhang

\IEEEcompsocitemizethanks{\IEEEcompsocthanksitem  Chengsheng Mao, Bin Hu, Philip Moore and Xiaowei Zhang are with the School of Information Science and Engineering, Lanzhou University, Gansu, China. 
\IEEEcompsocthanksitem  Corresponding to Chengsheng Mao ( e-mail: chshmao@gmail.com ).
} 
}

\IEEEtitleabstractindextext{%
\begin{abstract}
The \textit{k}-nearest-neighbor method performs classification tasks for a query sample based on the information contained in its neighborhood. Previous studies into the \textit{k}-nearest-neighbor algorithm usually achieved the decision value for a class by combining the support of each sample in the neighborhood. They have generally considered the nearest neighbors separately, and potentially integral neighborhood information important for classification was lost, e.g. the distribution information. This article proposes a novel local learning method that organizes the information in the neighborhood through local distribution. In the proposed method, additional distribution information in the neighborhood is estimated and then organized; the classification decision is made based on maximum posterior probability which is estimated from the local distribution in the neighborhood. Additionally, based on the local distribution, we generate a generalized local classification form that can be effectively applied to various datasets through tuning the parameters. We use both synthetic and real datasets to evaluate the classification performance of the proposed method; the experimental results demonstrate the dimensional scalability, efficiency, effectiveness and robustness of the proposed method compared to some other state-of-the-art classifiers. The results indicate that the proposed method is effective and promising in a broad range of domains.
\end{abstract}

\begin{IEEEkeywords}
Classification, nearest neighbors, local distribution, posterior probability.
\end{IEEEkeywords}
}

\maketitle

\IEEEdisplaynontitleabstractindextext

%
\IEEEpeerreviewmaketitle

\ifCLASSOPTIONcompsoc
\IEEEraisesectionheading{\section{Introduction}\label{sec:introduction}}
\else
\section{Introduction}\label{sec:introduction}
\fi
%
%
%
%
\IEEEPARstart{I}{n} classification problems, a collection of correctly classified samples is usually created as the training set, and classification of each new pattern is achieved using the evidence of samples in the training set. As a classification method, the \textit{k}-Nearest-Neighbor (kNN) algorithm implements the classification task for a query sample using the information of its \textit{k} nearest neighbors (kNNs) in the training set. A simple classification rule generated from the kNNs is the majority voting rule where the query sample is classified to the class represented by the majority of its kNNs. This is the well-known and understood original Voting kNN (V-kNN) rule developed by Cover \& Hart in \cite{cover1967nearest}.

As an instance-based learning algorithm, kNN algorithms have suffered from a number of issues which require resolution to enable its effective use in real-world learning tasks \cite{breiman1984classification}. However, following decades of documented research the majority of these issues have been resolved, or at least, mitigates to a greater or lesser degree \cite{aha1991instance}. Research has moreover identified a number of advantages for the kNN algorithm which has, in practice, been successfully applied to real-world problems and applications. The advantages include: (1) kNN is a non-parametric method and does not require any \textit{a priori} knowledge relating to data distributions; (2) the error rate for kNN methods can approach the optimal Bayes error rate in theory as the sample size tends to infinity \cite{cover1967nearest, duda2012pattern}; (3) kNN can deal with multi-class and multi-lable problems straightforward and effortlessly \cite{zhang2006svm,zhang2007ml}; and (4) the kNN algorithm can be implemented easily due to its relative simplicity.

Due to the advantages identified, the kNN algorithm has been the subject of extensive research and development for use in a range of research domains including Data Mining (DM), Machine Learning (ML) and Pattern Recognition (PR) \cite{geva1991adaptive,wang2014continuous,li2014safe,samanthula2015k}. Especially, kNN has been considered to be one of the top 10 methods in DM \cite{wu2008top} for its usefulness and effectiveness for classification. Also, the kNN algorithm shows its effectiveness in a variety of application areas, including computer vision \cite{boiman2008defense,wang2015visual}, Brain Computer Interface (BCI) \cite{khosrowabadi2011brain}, biometrics \cite{mao2015eeg}, affective computing \cite{li2012improve} and text categorization \cite{tan2006effective,lin2014similarity}. The kNN algorithm provides support for classification problems and usually achieves very good performances in a variety of domains \cite{govindarajan2010evaluation,choi2014secure,mao2015learning,hu2015bayesian}.

While the kNN method has been widely used over several decades due to its effectiveness and simplicity, the traditional V-kNN rule is not guaranteed to be the optimal method when implementation uses only the quantity information of the kNNs in the neighborhood. The organization of information contained in the neighborhood plays a very important role in generating effective and efficient decision rules for kNN algorithms. In this article, we generate the classification rule using the distribution information instead of the quantity information contained in the neighborhood and therefore propose a comprehensive kNN decision rule, termed the Local Distribution based kNN (LD-kNN). The local distribution information in the proposed method would comprehensively consider the quantity information, distance information and the sample position information contained in the neighborhood. Hence, the proposed method is expected to be more effective for classification problems. Our previous work \cite{mao2015nearest} has given an example of LD-kNN using the local Gaussian assumption to estimate the local distribution for classification.

In LD-kNN, the local distribution (around the query sample) of each class is estimated from the samples in the neighborhood; then the posterior probability of the query sample belonging to each class is estimated based on the local distribution. The query sample is assigned to the class with the greatest posterior probability. As the classification is based on the maximum posterior probability, the LD-kNN rule can achieve the Bayes error rate in theory.

Fig. \ref{examples} shows typical cases where the previous kNN methods may fail, while the LD-kNN can be effective through the consideration of local distribution. To validate the LD-kNN method, we consider multivariate data with numerical attributes, and estimate the local distribution for classification. The experimental results using both real and synthetic data sets show that LD-kNN is competitive compared to a number of the state-of-the-art classification methods.

The main contributions of this article can be summarized as follows:
\begin{itemize}
    \item  We introduce the concept of local distribution and first define the local probability density to describe the local distribution, and generate the connection between local distribution and global distribution.
    \item  Based on local distribution, we first propose a generalized local classification formulation (i.e. LD-kNN) that can organize the local distribution information in the neighborhood to improve the classification performance. Through tuning the parameters in LD-kNN, it can be effectively applied to various datasets.
    \item  We implement two LD-kNN classifiers respectively using Gaussian model estimation and kernel density estimation for local distribution estimation and design a series of experiments to research the properties of LD-kNN. And the experimental results demonstrate the superiority of LD-kNN compared to some other state-of-the-art classifiers.
\end{itemize}

The remainder of this article is structured as follows. Section \ref{sec:review} presents a review of kNN methods with a focus on the organization of neighborhood information in Section \ref{subsec:neighborinfo} which would use the idea of proposed method. Section \ref{sec:ldknn} introduces the concept of local distribution and the main idea of LD-kNN. The evaluation and experimental testing are set out in Section \ref{sec:experiments} where results are presented along with a comparative analysis with alternative approaches to pattern classification. Section \ref{sec:discussion} makes a discussion of LD-kNN. The article closes with a conclusion in Section \ref{sec:conclusion}.

\section{A review of kNN}\label{sec:review}
If a dataset has the property that data samples within a class have high similarity in comparison to one another, but are dissimilar to objects in other classes, i.e., near samples can represent the property of a query sample better than more distant samples, the kNN method is considered effective for classification. Fortunately, in the real world, objects in the same class usually have certain similarities. Therefore, kNN method can perform well if the given attributes are logically adequate to describe the class.

However, the original V-kNN is not guaranteed to be the optimal method and improving the kNN for more effective and efficient classification has remained an active research topic over many decades. Effective application of kNN usually depends on the neighborhood selection rule, neighborhood size and the neighborhood information organization rule. And, the efficiency of the kNN method is usually dependent on the reduction of the training data and the search for the neighbors which generally relies on the neighborhood selection rule. Thus, refinements to the kNN algorithm in recent years have focused mainly on four aspects: (1) the data reduction; (2) the neighborhood selection rule; (3) the determination of neighborhood size; and (4) the organization of information contained in the neighborhood, which is mainly concerned in this article.

\subsection{Data reduction}
Data reduction is a successful technique that simultaneously tackles the issues of computational complexity, storage requirements, and noise tolerance of kNN. Data reduction is applied to obtain a reduced representation of the data set that can closely maintain the property of the original data. Thus, the reduced training dataset should require much less storage and be much more efficient for neighbor search. The data reduction usually includes attribute reduction and instance reduction. As the name implies, attribute reduction aims to reduce the number of attributes while instance reduction tries to reduce the number of instances.

\subsubsection{Attribute reduction}
Attribute reduction can include deleting the irrelevant, weak relevant or redundant attributes/dimensions (known as attribute/feature subset selection), or constructing a more relevant attribute set from the original attribute set (known as attribute/feature construction).

The basic heuristic methods of attribute subset selection include stepwise forward selection, stepwise backward elimination, and decision tree induction \cite{nowozin2012improved}. And there are a number of advanced method for attribute subset selection, such as ReliefF algorithms \cite{robnik2003theoretical}, minimal-Redundancy-Maximal-Relevance criterion (mRMR)\cite{peng2005feature} and Normalized Mutual Information Feature Selection (NMIFS) \cite{estevez2009normalized}.

Attribute construction, usually known as dimensionality reduction, is applied so as to obtain a reduced or compressed representation of the original attribute set. There are a number of generic attribute construction methods, including linear transforms like Principal Components Analysis (PCA), Singular Value Decomposition (SVD) and Wavelet Transforms (WT) \cite{press2012numerical}, and nonlinear transforms like manifold learning including isometric feature mapping (Isomap) \cite{tenenbaum2000global}, Locally Linear Embedding (LLE) \cite{roweis2000nonlinear} and Laplacian Eigenmaps (LE) \cite{belkin2001laplacian} etc.

\subsubsection{Instance reduction}
Many researchers have addressed the problem of instance reduction to reduce the size of the training data. Pioneer research into the instance reduction was conducted by Hart \cite{hart1968condensed} with his Condensed Nearest Neighbor Rule (CNN) in 1960s, and subsequently by other researchers \cite{wilson1972asymptotic,tomek1976experiment,lowe1995similarity}. Aha et al. \cite{aha1991instance,aha1992tolerating} presented a series of instance based-learning algorithms that aim to reduce the storage requirement and improve the noise tolerance. The instance reduction techniques can include the instance filtering and instance abstraction \cite{veenman2005nearest}.

Instance filtering, also known as prototype selection \cite{pkekalska2006prototype} or instance selection \cite{jankowski2004comparison}, selects a subset of samples that can represent the property of the whole dataset from the original training data. Several instance filtering approaches have been reported like \cite{brighton2002advances,wilson2000reduction,garcia2012prototype,jankowski2004comparison}. In recent years, some more advanced prototype selection methods have been proposed such as Fuzzy Rough Prototype Selection (FRPS) \cite{verbiest2013frps,verbiest2013owa} and prototype selection using mutual information \cite{guillen2010new}.

Instance abstraction, also known as prototype generation, generates and replaces the original data with new artificial data \cite{triguero2012taxonomy}. Most of the instance abstraction methods use merging or divide-and-conquer strategies to set new artificial samples \cite{chang1974finding}, or are based on clustering approaches \cite{bezdek2001nearest}, Learning Vector Quantization (LVQ) hybrids \cite{kohonen1990improved}, advanced proposals \cite{lam2002discovering,lozano2006experimental,impedovo2014novel}, and evolutionary algorithms-based schemes \cite{cervantes2009ampso,triguero2010ipade,triguero2011differential}.

\subsection{Neighborhood selection}
The neighborhood selection rule usually relates to the definition of the ``closeness'' between two samples that used to find the \textit{k} nearest neighbors of a sample in question. The closeness in the kNN method is usually defined in terms of a distance or similarity function.\footnote{Similarity is usually considered the converse of distance \cite{widdows2004geometry}; in this paper we use similarity and distance interchangeably.} For example, Euclidean distance is the most commonly used in any distance-based algorithms for numerical attributes; while Hamming distance is usually used for categorical attributes in kNN, and edit distance for sequences, maximal sub-graph for graphs. There are a number of other advanced distance metrics that can be used in kNN depending on the type of dataset in question, such as Heterogeneous Euclidean-Overlap Metric (HEOM) \cite{wilson1997improved}, Value Difference Metric (VDM) and its variants \cite{stanfill1986toward,wilson1997improved}, Minimal Risk Metric (MRM) \cite{blanzieri99probability}, and Neighborhood Counting Measure (NCM) \cite{wang2006nearest}. There is also a large body of documented research addressing similarity metrics to improve the effectiveness of kNN \cite{steven1991a,yu2008distance,cunningham2009taxonomy,hsu2009design}; however there are no known distance functions which have been shown to perform consistently well in a broad range of conditions \cite{wang2006nearest}.

There are some other neighborhood selection rules that select the \textit{k} neighbors not only based on the distance function. Hastie and Tibshirani \cite{hastie1996discriminant} used a local linear discriminant analysis to estimate an effective metric for computing neighborhoods, where the local decision boundaries were determined from the centroid information, and the neighborhoods shrank in the direction orthogonal to the local decision boundaries, and stretched out in the direction parallel to the decision boundaries. Guo and Chakraborty \cite{guo2010bayesian} proposed a Bayesian adaptive nearest neighbor method (BANN) that could adaptively select the neighborhood according to the concentration of the data around each query point with the help of discriminants. The Nearest Centroid Neighborhood (NCN) proposed by Chaudhuri \cite{chaudhuri1996new} tries to consider both the distance-based proximity and spatial distribution of \textit{k} neighbors, and works very well, particularly in cases with small sample size \cite{altiotanccay2011improving,gou2012local}. In addition, some researchers use reverse neighborhood where the samples have the query sample in their neighborhoods \cite{korn2000influence,tao2004reverse,koh2013finding}.

The kNNs search methods should be another aspect of neighborhood selection for improving the search efficiency. Though effectively could the basic exhaustive search find the kNNs, it would take much running time, especially if the scale of the dataset is huge. KD-tree \cite{bentley1975multidimensional,friedman1977algorithm} is one of the most commonly used methods for kNNs search; it is effective and efficient on lower dimensional spaces, while its performance has been widely observed to degrade badly on higher dimension spaces \cite{weber1998quantitative}. Some other tree structure based methods can also provide compelling performance in some practical applications such as ball tree \cite{uhlmann1991satisfying}, cover tree \cite{beygelzimer2006cover}, Principal Axis Tree \cite{mcnames2001fast} and Orthogonal Search Tree \cite{liaw2010fast} etc.

In many applications, the exact kNNs is not rigorously required, some approximate kNNs may be good substitutes for the exact kNNs. The approximate kNNs search should be more efficient than the exact kNNs search. This relaxation led to a series of important approximate kNNs search techniques such as Locality Sensitive Hashing (LSH) \cite{datar2004locality}, Spill-tree \cite{liu2004investigation} and Bregman ball tree (Bbtree) \cite{cayton2008fast} etc. Some other recent techniques for efficient approximate kNNs search can be found in \cite{muja2009fast,hajebi2011fast,malkov2014approximate,jegou2011product,esmaeili2012fast,har2012approximate}.

\subsection{Neighborhood size}
The best choice of neighborhood size of kNN much depends on the training data, larger values of \textit{k} could reduce the effect of noise on the classification, but make boundaries between classes less distinct \cite{everitt2011miscellaneous}. In general, a larger size of training data requires a larger neighborhood size so that the classification decisions can be based on a larger portion of data. In practice, the choice of \textit{k} is usually determined by the cross-validation methods \cite{tran2005empirical}. However, the cross validation methods would be time consuming for the iterative training and validation. Thus, researchers try to find an adaptive and heuristic neighborhood size selection method for kNN without training and validation.

Wang et al. \cite{wang2005nearest} solved this ``choice-of-k'' issue by an alternative formalism which selects a set of neighborhoods and aggregates their supports to create a classifier less biased by \textit{k}. And another article \cite{wang2006neighborhood} proposed a neighborhood size selection method using statistical confidence, where the neighborhood size is determined by the criterion of statistic confidence associated with the decision rule. In \cite{hall2008choice}, Hall et al. detailed the way in which the value of \textit{k} determines the misclassification error, which may motivate new methods for choosing the value of \textit{k} for kNN. In addition, some other literatures by Guo et al. \cite{guo2003knn,guo2004knn,guo2006using} proposed a kNN model for classification where the value of \textit{k} is automatically determined with different data. Though the kNN model can release the choice of \textit{k} and can achieve good performances, it introduces two other pruning parameters which may have some effects on classification.

\subsection{Neighborhood information organization}\label{subsec:neighborinfo}
In kNN methods, the classification decision for a query sample is made based on the information contained in its neighborhood. The organization of neighborhood information can be described as, given a query sample $X$ and its kNNs from the training set (the \textit{i}th nearest neighbor is denoted by $X_i$ with the class label $l_i$ and distance $d_i$), how to organize these information to achieve an effective decision rule for the class label $l$ of $X$? There are a number of decision rules which can be employed in the previous studies as follows.

\subsubsection{Voting kNN rules}
The V-kNN rule is the most basic rule that uses only the quantity information to create a majority voting rule where the query sample $X$ is assigned to the class represented by the majority of its kNNs. The V-kNN rule can be expressed as Equation \ref{vknn}, where $I(\cdot)$ is an indicator function that returns 1 if its condition is true and 0 otherwise, and $N_C$ is the number of kNNs in class $C$.
\begin{equation}
\text{V-kNN: } l =\arg \max_{C}{\sum_{i=1}^{k}{I(l_i==C)}} = \arg \max_{C}{N_C}
\label{vknn}
\end{equation}

The V-kNN rule may not represent the optimal approach to organize the neighborhood information when using only quantity information. The main drawback of the V-kNN rule is that the kNNs of a query sample are assumed to be contained in a region of relatively small volume; thus the difference among the kNNs can be ignored. In practice, if the neighborhood can not be too small, this differences may not be always negligible, and can become substantial. Therefore, it can be questionable to assign an equal weight to all the kNNs in the decision process regardless of their relative positions to the query sample. In Fig. \ref{example1}, the V-kNN method finds the ten nearest neighbors of the query sample, of which the 4 belong to Class 1 and the remaining 6 somewhat further distant neighbors belong to Class 2. In this case, the V-kNN rule will misclassify the query sample to Class 2 which has more nearest neighbors, regardless of the distances; clearly it would be more reasonable to classify the query sample to Class 1 for which the samples are nearer to the query sample than Class 2.

\begin{figure}[!t]
\centering
\subfloat{\includegraphics[width=0.8\columnwidth]{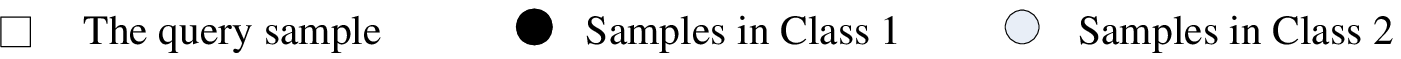}}
\addtocounter{subfigure}{-1}

\subfloat[V-kNN]{\includegraphics[width=0.4\columnwidth]{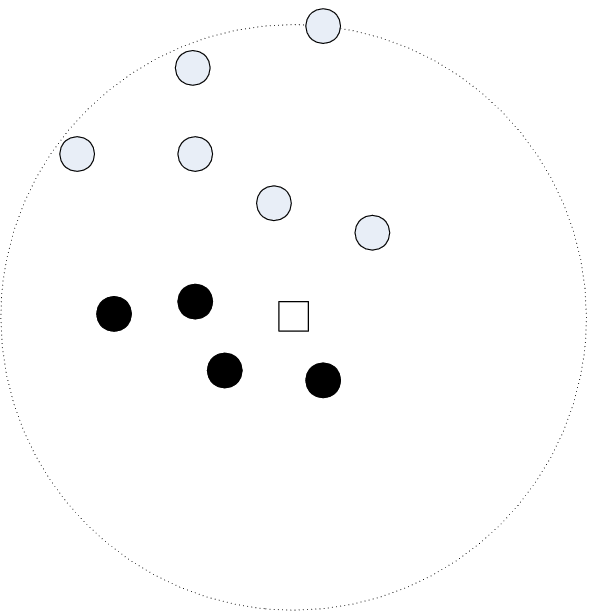}
\label{example1}}
\hspace{0.5cm}
\subfloat[DW-kNN]{\includegraphics[width=0.4\columnwidth]{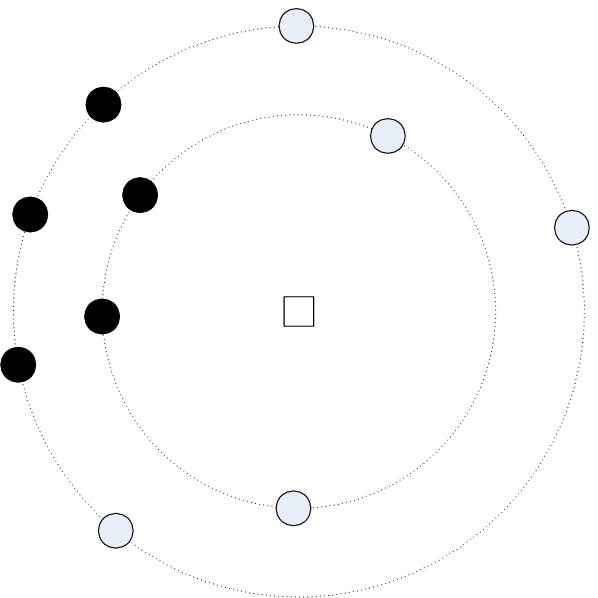}
\label{example2}}

\subfloat[LC-kNN]{\includegraphics[width=0.4\columnwidth]{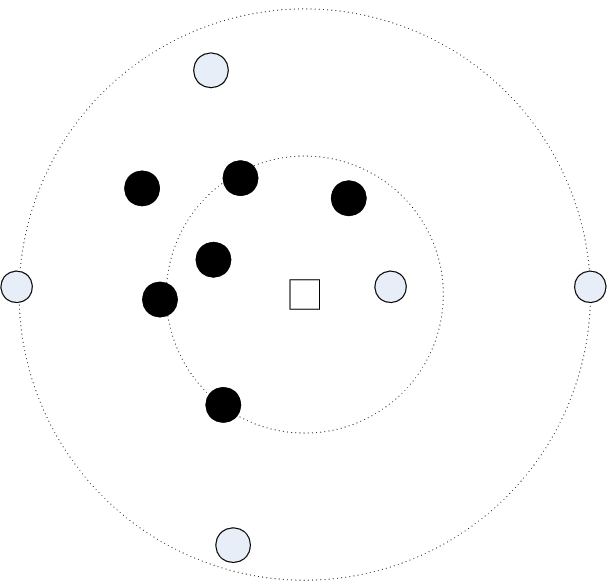}
\label{example3}}
\hspace{0.5cm}
\subfloat[SVM-kNN]{\includegraphics[width=0.4\columnwidth]{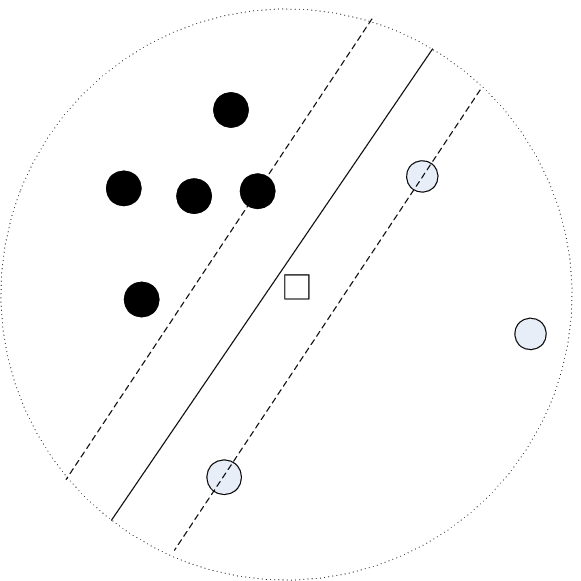}
\label{example4}}

\caption{Some cases show the drawbacks of the corresponding kNN rules.}
\label{examples}

\end{figure}

\subsubsection{Weighted kNN rules}
A refinement to the V-kNN is to apply a weight to each of the kNNs based on its distance to the query point with a greater weight for a closer neighbor; and the query sample is assigned to the class in which the weights of the kNNs sum to the greatest value. This is the Distance-Weighted kNN (DW-kNN) rule. Compared with the V-kNN rule, the DW-kNN rule additionally takes into account the distance information of the kNNs in the decision. If assigning a weight $w_i$ to the \textit{i}th nearest neighbor of $X$, then the DW-kNN rule can be expressed as
\begin{equation}
\text{DW-kNN: } l=\arg \max_{C}{\sum_{i=1}^{k}{I(l_i==C)}\cdot w_i}.
\label{dwknn}
\end{equation}

There is a large body of literature addressing research into the weighting functions for DW-kNN. Dudani \cite{dudani1976distance} has proposed a DW-kNN rule by assigning the \textit{i}th nearest neighbor $x_i$ a distanced-based weight $w_i$ defined as Equation \ref{weight1}. Gou et al. \cite{gou2011improving} have modified Dudani's weighting function using a dual distance weighted voting, which can be expressed as Equation \ref{weight2}. Some other weighting rules can be found at \cite{Hechenbichler2006Weighted,zuo2008kernel}. However, there is no one weighting method that is known to perform consistently well even under some conditions.
\begin{equation}
w_i=
\begin{cases}
\frac{d_k-d_i}{d_k-d_1},  & d_k\neq d_1 \\
1, & d_k=d_1
\end{cases}
\label{weight1}
\end{equation}
\begin{equation}
w_i=
\begin{cases}
\frac{d_k-d_i}{d_k-d_1}\cdot\frac{1}{i},  & d_k\neq d_1 \\
1, & d_k=d_1
\end{cases}
\label{weight2}
\end{equation}

While DW-kNN can improve the performance over V-kNN by introducing the distance information for the decision rule, it loses the relationship information among the kNNs in the neighborhood; thus it may fail to make an effective classification decision under certain circumstance. For example, in Fig. \ref{example2}, the DW-kNN finds ten nearest neighbors from the training samples of two classes; however, while the nearest neighbors from both classes have the same distance information, the DW-kNN will fail to make an effective classification decision in this case.

\subsubsection{Local center-based kNN rules}
Research has investigated the organization of neighborhood information for a more effective classification decision rule resulting in a number of improvements to the kNN rule. The Local Center-based kNN (LC-kNN) rules may address some issues the DW-kNN rules encounter. The LC-kNN generalizes the nearest neighbors from each class as a center prototype for classification \cite{keller1985fuzzy}. The query sample is assigned to the class represented by the nearest center prototype as expressed in \ref{lcknn}, where $d(\cdot)$ is the distance between two samples.
\begin{equation}
\text{LC-kNN: } l=\arg \min_{C}{d(X,\frac{\sum_{i=1}^{k}{I(l_i==C)\cdot{X_i}}}{\sum_{i=1}^{k}{I(l_i==C)}})}
\label{lcknn}
\end{equation}

The basic LC-kNN rule is much less effective for an imbalanced neighborhood than for a balanced neighborhood. The Categorical Average Pattern (CAP) method \cite{hotta2004pattern,mitani2006local} can refine the LC-kNN more effectively by selecting a balanced neighborhood; the \textit{k} nearest neighbors to the query sample are selected from each class to constitute the neighborhood. Suppose the \textit{i}th nearest neighbor in class $C$ is denoted by $X_i^C$, the CAP rule can be expressed as
\begin{equation}
\text{CAP: } l=\arg \min_{C}{d(X,\frac{\sum_{i=1}^{k}{X_i^C}}{k})}.
\label{cap}
\end{equation}

The Local Probabilistic Center (LPC) method proposed by Li et al. \cite{li2008nearest} has demonstrated a refinement to the CAP method by estimating a prior probability that a training sample belongs to its corresponding class. The LPC of each class is estimated based on these prior probabilities to reduce the influence of negative contributing samples. The LPC rule can be expressed as Equation \ref{lpc}, where $p_i$ denotes the probability of $X_i^C$ belonging to class $C$. Li et al. have also designed an estimation of $p_i$ in \cite{li2008nearest}.
\begin{equation}
\text{LPC: } l=\arg \min_{C}{d(X,\frac{\sum_{i=1}^{k}{p_i \cdot X_i^C}}{\sum_{i=1}^{k}{p_i}})}
\label{lpc}
\end{equation}

LC-kNN rules, including CAP and LPC, consider the center information of each class in the neighborhood when making a classification decision. However, the center information may not be sufficient to describe the distribution of the neighborhood. For example, in Fig. \ref{example3} The CAP or LPC selects 5 nearest samples to the query sample for each of the two classes; the local center of Class 2 is closer to the query sample than that of Class 1. However, the sixth nearest sample in Class 1 is closer to the query sample than some of the 5 nearest samples in Class 2, it should therefore be taken into account as a support for classification into Class 1. Thus, it would be more reasonable to classify the query sample to Class 1 due to its more dense distribution around the query sample.

\subsubsection{Support vector machine based kNN rules}
Thanks to the development of support vector machine (SVM) \cite{cortes1995support,chang2011libsvm}, documented research has proposed a hybrid approach using the SVM and kNN (SVM-kNN) to improve the performance of the single classifiers \cite{zhang2006svm,blanzieri2008nearest}. The SVM-kNN is designed to train a local SVM model for each neighborhood to classify the corresponding query sample. In other words, SVM-kNN generates the local decision boundary from the kNNs to classify the particular query sample. We can simply describe the SVM-kNN as
\begin{equation}
l=\mathit{SVM_{N_k(X)}(X)}
\end{equation}
where $\mathit{SVM_{N_k(X)}(\cdot)}$ denotes the SVM model is trained from $N_k(X)$ which is the set of kNNs of $X$.

SVM-kNN can achieve higher classification accuracy than SVM or kNN. However, because the SVM model training is for a particular query sample, it may be time consuming to train a set local SVM models for all query samples, especially for cases where there is a large number of query samples and a large neighborhood size. Furthermore, The SVM can not directly handle multi-class classification problems, therefore, if the neighborhood contains three or more classes, the SVM-kNN must laboriously transform the multi-class classification problem into several binary classification problems \cite{kressel1999pairwise,hsu2002comparison}. In addition, the SVM-kNN only extracts the division information from the neighborhood without regard to the number of kNNs in each class. This may make the SVM-kNN ineffective in some cases. For example, in Fig. \ref{example4} the SVM-kNN generates the optimal decision boundary for the two classes in the neighborhood of the eight nearest neighbors. SVM-kNN will classify the query sample to Class 2 according to the decision boundary, while it may be more reasonable to be classified to Class 1, because there are more and nearer neighbors in Class 1 than in Class 2.

In this article, the proposed LD-kNN method is designed to address the issues identified for the previous kNN rules by taking into account the local distribution information contained in the neighborhood. The decision rule of LD-kNN can comprehensively organize the information contained in the neighborhood and can be effective for most classification problems.

\section{LD-kNN Classification Rule}\label{sec:ldknn}
The kNN algorithms generate the classification decision for a query sample from its neighborhood. To achieve a more representative class for a query sample from its neighborhood, the proposed LD-kNN rule will consider the local distribution information which is a  combination of information used in previous kNN methods.

\subsection{Local distribution}
Suppose a continuous closed region $R$ in the sample space of $X$\footnote{$X$ can be a single variable for univariate distribution or a multivariable for multivariate joint distribution.}, in the $n$ observations of $X$ we have observed \textit{k} samples falling within $R$, we can estimate the prior probability that a point falls in $R$ as $\hat P(R)=k/n$ when $k,n\rightarrow+\infty$. If the region $R$ is small enough, i.e. $V(R)\rightarrow 0$,\footnote{$V(\cdot)$ denotes the size of a region, volume for 3-dimensional cases, area for 2-dimensional cases and hyper-volume for higher dimensional cases.} then the region $R$ can be regarded uniform, thus for each point $X\in R$, we can estimate $f(X)$ by histograms as
\begin{equation}
\mathit{\hat{f}(X)=\frac{P(R)}{V(R)}=\frac{k}{nV(R)}}.
\label{histestimate}
\end{equation}

However, a large $k$ and a small local region $R$ can not coexist in practice. If a larger $k$ is selected to ensure the accuracy of $\hat P(R)=k/n$, the region $R$ can not be always regarded uniform, which may frustrate the histogram estimator. Thus, we propose dealing with this problem by local distribution which is described by Local Probability Density (LPD), versus the global distribution and its probability density function (PDF).

\begin{definition}[Local Probability Density] \label{LPDdef}
In the sample space of variable $X$, given a continuous closed region $R$, for an arbitrary point $X$, the local probability density in $R$ is defined as
\begin{equation}
 f_R(X)=\lim_{\delta(X)\rightarrow X}\frac{P(\delta(X)|R)}{V(\delta(X))}
\end{equation}
where $\delta(X)$ and $V(\delta(X))$ respectively denote the neighborhood of $X$ and the size of the neighborhood, and $P(\delta(X)|R)$ is the conditional probability that a point is in $\delta(X)$ given that it is in $R$.
\end{definition}

In Definition \ref{LPDdef}, the local region $R$ should be predefined for the estimation of $f_R(X)$, if $R$ is defined as the whole sample space, then $P(\delta(X)|R)=P(\delta(X))$, $f_R(X)$ becomes the global PDF. Conversely, if $R$ is small enough, it can be assumed uniform, then $P(\delta(X)|R)=V(\delta(X))/V(R)$ and $f_R(X)=1/V(R)$. Similar to PDF in the whole sample space, LPD describes the local distribution in a certain region. Subsequently, LPD also has the two properties: nonnegativity and unitarity, as Theorem \ref{properties} describes.

\begin{theorem} \label{properties}
If $f_R(X)$ denotes the local probability density of $X$ in a continuous region $R$, then
\begin{equation}
\begin{aligned}
&(I)Nonnegtivity:  \ f_R(X)
\begin{cases}
\geq 0, \quad  for \quad X \in R;  \\
= 0, \quad  for \quad X \notin R
\end{cases} \\
&(II)Unitarity:  \quad \int_{R}{f_R(X)dX}=1.
\end{aligned}
\end{equation}
\end{theorem}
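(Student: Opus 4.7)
The plan is to verify both properties directly from Definition 1, treating $f_R$ essentially as the conditional probability density of $X$ given the event $\{X\in R\}$, and then reducing each claim to standard facts about probabilities and densities. As a preliminary identification, I would observe that by the conditional probability formula, $P(\delta(X)\mid R)=P(\delta(X)\cap R)/P(R)$, so in the limit as $\delta(X)\to X$ one can recognize
\begin{equation}
f_R(X)=\frac{1}{P(R)}\lim_{\delta(X)\to X}\frac{P(\delta(X)\cap R)}{V(\delta(X))},
\end{equation}
which, for $X$ in the interior of $R$, reduces to $f(X)/P(R)$, where $f$ is the ordinary PDF of $X$. This identification is the linchpin that makes both parts essentially automatic.

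For nonnegativity, I would split into the two cases given in the statement. When $X\in R$, the quantities $P(\delta(X)\mid R)$ and $V(\delta(X))$ are each nonnegative and the latter is strictly positive, so the ratio is $\ge 0$ and the limit inherits nonnegativity. When $X\notin R$, I would invoke the closedness of $R$: since $R$ is a closed region, the complement is open, so there exists an open neighborhood of $X$ disjoint from $R$. Consequently, for all sufficiently small $\delta(X)$ we have $\delta(X)\cap R=\emptyset$, hence $P(\delta(X)\mid R)=0$ and the limit is exactly $0$.

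For unitarity, I would use the identification $f_R(X)=f(X)/P(R)$ on $R$ (extended by $0$ off $R$ thanks to the previous step) and simply compute
\begin{equation}
\int_R f_R(X)\,dX=\frac{1}{P(R)}\int_R f(X)\,dX=\frac{P(R)}{P(R)}=1.
\end{equation}
Equivalently, one may view $f_R$ as the density of the conditional distribution of $X$ given $\{X\in R\}$, for which the integral over the support is $1$ by definition of a probability measure.

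The main obstacle, and the only genuinely delicate point, is justifying the identification $f_R(X)=f(X)/P(R)$ rigorously, since it asks for an interchange of the pointwise limit in the definition with the ratio form of conditional probability, and implicitly assumes enough regularity of the underlying measure (e.g.\ a Lebesgue differentiation style statement so that the pointwise limit exists almost everywhere on $R$). I would handle this by explicitly assuming $X$ is a continuous random variable with a PDF $f$ that is continuous at the point under consideration, which is consistent with the setting already used in the histogram estimator of equation (\ref{histestimate}); with that regularity, the limit reduces to $f(X)/P(R)$ by the continuity of $f$, and both properties follow as sketched.
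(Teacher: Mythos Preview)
Your proof is correct, and for nonnegativity it matches the paper's argument essentially verbatim: nonnegativity of the conditional probability for $X\in R$, and for $X\notin R$ the existence of a sufficiently small neighborhood disjoint from $R$ (the paper does not explicitly invoke closedness of $R$, but makes the same claim).

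For unitarity you take a detour relative to the paper. Your primary route first establishes the identification $f_R(X)=f(X)/P(R)$ and then integrates; but that identification is precisely the content of Theorem~\ref{pdfest}, which the paper states and proves \emph{after} Theorem~\ref{properties}. The paper's own argument for (II) is the one-liner you relegate to an ``equivalently'' remark: since $f_R$ is by construction the density of the conditional measure $P(\cdot\mid R)$, one has $\int_R f_R(X)\,dX=P(R\mid R)=1$. That direct route is preferable here---it sidesteps the regularity and differentiation issues you yourself flag, and it keeps Theorems~\ref{properties} and~\ref{pdfest} logically independent rather than effectively merging them.
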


\begin{proof}
The probability $P(\delta(X)|R)$ can not be negative in the sample space, and if $X\notin R$, there must be a sufficiently small neighborhood $\delta(X)$, s.t. $\delta(X)\bigcap R = \Phi$ and then $P(\delta(X) \vert R)=0$, thus according to the definition, $f_R(X)=0$. As shown in the property (I).

For property (II), note that $f_R(X)$ is the differential of $P(X|R)$, the integral of $f_R(X)$ in region $R$ is $P(R|R)=1$.
\end{proof}

We can also generate the relationship between LPD and PDF as follows:
\begin{theorem} \label{pdfest}
In the sample space of variable $X$, if a continuous closed area $R$ has the prior probability $P(R)$, and a point $X\in R$ has the local probability density $f_R(X)$ in a local region $R$, then the global probability density of the point $X$ is
\begin{equation}
f(X)=f_R(X)P(R) \quad  for \quad X\in R.
\label{globalpdf}
\end{equation}
\end{theorem}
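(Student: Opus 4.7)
The plan is to start from the two limit definitions and reduce the identity to a routine conditional-probability manipulation. By Definition~1, the local probability density at $X$ is
\begin{equation*}
f_R(X)=\lim_{\delta(X)\rightarrow X}\frac{P(\delta(X)\mid R)}{V(\delta(X))},
\end{equation*}
while the global probability density at $X$ is, by the usual histogram-style limit already used in the derivation of Equation~\ref{histestimate},
\begin{equation*}
f(X)=\lim_{\delta(X)\rightarrow X}\frac{P(\delta(X))}{V(\delta(X))}.
\end{equation*}
The first step is therefore to rewrite $P(\delta(X)\mid R)$ in numerator form as $P(\delta(X)\cap R)/P(R)$, using the definition of conditional probability (which is well-defined because $P(R)>0$ is implicit in the hypothesis).

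The second step is to dispose of the intersection $\delta(X)\cap R$. Because $R$ is assumed to be a continuous closed region and $X\in R$ (treated as an interior point), there exists $\delta_0>0$ such that every neighborhood $\delta(X)$ with $V(\delta(X))<\delta_0$ is contained in $R$; hence $\delta(X)\cap R=\delta(X)$, and $P(\delta(X)\cap R)=P(\delta(X))$. Substituting this into the limit and pulling the constant $1/P(R)$ outside yields
\begin{equation*}
f_R(X)=\frac{1}{P(R)}\lim_{\delta(X)\rightarrow X}\frac{P(\delta(X))}{V(\delta(X))}=\frac{f(X)}{P(R)},
\end{equation*}
which rearranges to the claimed identity $f(X)=f_R(X)\,P(R)$.

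The only subtle point, and the part I expect to be the main obstacle to stating cleanly, is the case where $X$ lies on the boundary of $R$: then no neighborhood $\delta(X)$ is entirely contained in $R$, so $P(\delta(X)\cap R)$ is strictly smaller than $P(\delta(X))$ and the argument above does not apply verbatim. I would handle this by either restricting the statement to interior points of $R$ (noting that the topological boundary of a ``nice'' closed region has Lebesgue measure zero and therefore does not affect any probabilistic integral over $R$), or by explicitly shrinking the neighborhood symmetrically so the argument is an almost-everywhere statement with respect to the reference measure. Aside from this boundary caveat, the proof is a short limit computation once conditional probability is unfolded.
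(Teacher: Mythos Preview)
Your proposal is correct and follows essentially the same route as the paper: both arguments use that for $X\in R$ and sufficiently small $\delta(X)$ one has $\delta(X)\subset R$, whence $P(\delta(X))=P(\delta(X)\mid R)\,P(R)$, and then pass to the limit. The paper's proof simply asserts $\delta(X)\subset R$ without comment, so your added discussion of the boundary case is extra care rather than a different idea.
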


\begin{proof}
Due to $X\in R$, then if $\delta(X)\rightarrow X$, we have $\delta(X)\subset R$, and then
\begin{equation}
\begin{aligned}
P(\delta(X))            = P(\delta(X)|R)P(R).
\end{aligned}
\end{equation}

The global probability density can be computed as
\begin{equation}
\begin{aligned}
f(X) &= \lim_{\delta(X)\rightarrow X}{\frac{P(\delta(X))}{V(\delta(X))}} \\
     &= \lim_{\delta(X)\rightarrow X}{\frac{P(\delta(X)|R)P(R)}{V(\delta(X))}}  \\
     &=f_R(X)P(R).
\end{aligned}
\end{equation}
\end{proof}

Theorem \ref{pdfest} provides a method to estimate global probability density from LPD. Due to the locality of LPD, it is supposed much simpler than the global density. Thus we can assume a simple parametric probabilistic model in $R$, and estimate the model parameters from the samples falling in $R$. In the histogram estimation, the area $R$ is assumed uniform, and we can derive $f_R(X)=\frac{1}{V(R)}$ and then the Formula \ref{histestimate}, where the condition $V(R)\rightarrow 0$ is to make the uniform assumption more likely to be correct.

However, the local region $R$ can not be always assumed uniform. If we assume a probabilistic model with the probability density function $f(X;\theta)$ in the local area $R$. We get the estimation of parameter $\hat\theta$ from the points falling in $R$. To ensure the unit measure, i.e. property (II), of LPD, the estimation of LPD should be normalized as
\begin{equation}
\hat f_R(X)=\frac{f(X;\hat\theta)}{\int_{R}{f(X;\hat\theta)dX}}.
\label{lpdest}
\end{equation}

As the global probability can be estimated based on LPD, the kNN methods can turn to the LPD for the distribution information to make an effective classification decision.

\subsection{LD-kNN formulation}
Given a specified sample $X$, by maximum posterior hypothesis, we can generate an effective classification rule by assigning $X$ to the class with the maximum posterior probability conditioned on $X$. In other words, the purpose is looking for the probability that sample $X$ belongs to class $C$, given that we know the attribute description of $X$ \cite{han2006data}. The predicted class of $X$ (denoted as $l$) can be formulated as
\begin{equation}
\begin{aligned}
l &=\arg\max_{C}{P(C|X)} = \arg\max_{C} \frac{P(X|C)P(C)}{P(X)}  \\
&= \arg\max_{C} P(X|C)P(C)
\end{aligned}
\label{maxposter}
\end{equation}

Only $P(X|C)P(C)$ need to be maximized. $P(C)$ is the class prior probability and can be estimated by $N_{C,T}/N_T$, where $N_{C,T}$ is the number of samples in class $C$ in the training set, and $N_T$ denotes the number of all the samples in the training set. As the training set is also constant, the classification problem can be transformed to
\begin{equation}
l=\arg\max_{C}{N_{C,T}P(X|C)}.
\label{eq:max1}
\end{equation}

If the attributes are continuous valued, we can use the local distribution to estimate the related conditional probability density $f(X|C)$ as a substitute of $P(X|C)$. We can select a neighborhood of $X$ $\delta_C(X)$ to estimate $f(X|C)$ through Formula \ref{globalpdf} as
\begin{equation}
\begin{aligned}
f(X|C) &= f_{\delta_C(X)}(X|C) \cdot P(\delta_C(X))  \\
&=f_{\delta_C(X)}(X|C) \cdot N_C/N_{C,T}
\end{aligned}
\end{equation}
where $N_C$ is the number of samples from class $C$ falling in $\delta_C(X)$ in the training set.

Then, the maximization problem in Equation \ref{eq:max1} can be transformed to
\begin{equation}
l  = \arg\max_{C}\{N_C \cdot f_{\delta_C(X)}(X|C)\}.
\label{rule}
\end{equation}

Subsequently, the only thing needed is to estimated the LPD of class $C$ at $X$ (i.e. $f_{\delta_C(X)}(X|C)$) in the neighborhood.

\subsection{Local distribution estimation}
For a query sample $X$, if we have observed $N_C$ samples from class $C$ falling in the neighborhood $\delta_C(X)$ (denoted as $X_i^{C}(i=1,\cdots,N_{C})$), the estimation of LPD is similar to the PDF estimation in the whole sample space except that the local distribution is supposed to be much simpler, and thus we can assume a simple distribution model in $\delta_C(X)$. In our research, we assume a Gaussian model in $\delta_C(X)$ for class $C$, and estimate the corresponding parameters from $\delta_C(X)$. As a contrast, we also use another complicated probabilistic model which is estimated using kernel density estimation (KDE) in the neighborhood.

\subsubsection{Gaussian Model Estimation}
Gaussian Model Estimation (GME) is a parametric method for probability estimation. In GME, we assume that the samples in the neighborhood in each class follow a Gaussian distribution with a mean $\mu$ and a covariance matrix $\Sigma$ defined by Equation \ref{gaussian}, where $d$ is the number of the features.
\begin{equation}
f(X;\mu,\Sigma)=\frac{1}{\sqrt{(2\pi)^d|\Sigma|}}\exp\{-\frac{(X-\mu)^T\Sigma^{-1}(X-\mu)}{2}\}.
\label{gaussian}
\end{equation}

For a certain class $C$, we can take the maximum likelihood estimation (MLE) to estimate the two parameters (the mean $\mu_C$ and the covariance matrix $\Sigma_C$) from the $N_C$ samples falling in $\delta_C(X)$ as
\begin{equation}
\hat{\mu}_{C}=\frac{1}{N_{C}}\sum_{i=1}^{N_{C}}{X_i^{C}}
\label{mean}
\end{equation}
\begin{equation}
\hat{\Sigma}_{C}=\frac{1}{N_{C}}\sum_{i=1}^{N_{C}}{(X_i^{C}-\hat{\mu}_{C})(X_i^{C}-\hat{\mu}_{C})^T}
\label{variance}
\end{equation}

In our approach, to ensure the positive-definiteness of $\Sigma$, we make the naive assumption that all the attributes on each class do not correlate with one another in a local region; that is, the covariance matrix ($\Sigma$) would be a diagonal matrix. Then the MLE of the covariance matrix $\Sigma_{C}$ would be as
\begin{equation}
\hat{\Sigma}_{C}=diag(\frac{1}{N_{C}}\sum_{i=1}^{N_{C}}{(X_i^{C}-\hat{\mu}_{C})(X_i^{C}-\hat{\mu}_{C})^T})
\label{var}
\end{equation}
where $diag(\cdot)$ converts a square matrix to a diagonal matrix with the same diagonal elements.

The Gaussian model is a global model that integrates to unity in the whole sample space, while the LPD should integrate to unity in the local region. Thus, as described in Formula \ref{lpdest}, the Gaussian model in $\delta_C(X)$ should be normalized, we then get the LPD for $X$ in $\delta_C(X)$ for class $C$ as
\begin{equation}
f_{\delta_C(X)}(X|C)=\frac{f(X;\hat\mu_C,\hat\Sigma_C)}{\int_{\delta_C(X)}{f(X;\hat\mu_C,\hat\Sigma_C)dX}}.
\label{GMELPD}
\end{equation}

\subsubsection{Kernel Density Estimation}
We have also conducted KDE in the neighborhood for LPD estimation, where less rigid assumptions have been made about the prior distribution of the observed data. To estimate the LPD $f_{\delta_C(X)}(X|C)$ for a certain class $C$ in the neighborhood, we use the common KDE with identical Gaussian kernels and an assigned bandwidth vector $H_C$, expressed as
\begin{equation}
\hat{f}(X|C)=\frac{1}{N_{C} \cdot prod(H_C)}\sum_{i=1}^{N_{C}}{K((X_i^{C}-X)./H_C)}
\label{kde}
\end{equation}
where the operator $./$ denotes the right division between the corresponding elements in two equal-sized matrices or vectors, $prod(\cdot)$ returns the product of the all elements in a vector, and $K(\cdot)$ is the kernel function. In this article, we use the Gaussian kernel defined as
\begin{equation}
K(X)=\frac{1}{(2\pi)^{d/2}}\exp\{-\frac{X^T \cdot X}{2}\}.
\label{kernel}
\end{equation}

The estimated $\hat{f}(X|C)$ also need to be normalized as
\begin{equation}
f_{\delta_C(X)}(X|C)=\frac{\hat{f}(X|C)}{\int_{\delta_C(X)}{\hat{f}(X|C)dX}}.
\label{KDELPD}
\end{equation}

We selected the bandwidth $h_C$ for a certain class $C$ in a dimension according to Silverman's rule of thumb \cite{silverman1986density,turlach1993bandwidth} by
\begin{equation}
\hat h_C=(\frac{4\hat{\sigma}^5}{3N_C})^{1/5}\approx1.06\hat{\sigma}_C N_C^{-1/5}
\label{bandwidth}
\end{equation}
where $\hat h_C$ is the estimated optimal bandwidth, and $\hat{\sigma}_C$ is the estimated standard deviation of class $C$ in $\delta_C(X)$. The $h_C$ for each dimension is computed independently.

\subsection{Generalized LD-kNN rules}
In kNN methods, $\delta_C(X)$ in Formula \ref{rule} is constant for all classes and is selected as a neighborhood that contains $k$ samples in the training set. Thus, for a query sample $X$ and a training set $T$, given the parameter $k$ and a certain distance metric $d(\cdot)$, a generalized kNN method can execute the following steps to evaluate which class the query sample $X$ should belong to.

\textbf{Step 1:} For the query samples $X$, find its $k$ nearest neighbors as a neighborhood set $N_k(X)$ from the training set $T$ according to the distance function, i.e.,
\begin{equation}
N_k(X)=\{x|d(x,X)\leq d(X_k,X), x \in T\}
\end{equation}
where $X_k$ is the \textit{k}th nearest sample to $X$ in $T$.

\textbf{Step 2:} Divide the neighborhood set $N_k(X)$ into clusters according to the class label, such that samples in the same cluster have the same class label while samples from different clusters have different class labels. If the \textit{j}th cluster labeled $C_j$ is denoted by $X^{C_j}$, then
\begin{equation}
X^{C_j}=\{x|l(x)=C_j,x \in N_k(X)\}
\end{equation}
where $l(x)$ denotes the observed class label of $x$, and the division is expressed by
\begin{equation}
X^{C_i}\bigcap X^{C_j} = \Phi \quad  for \quad i\neq j;
\end{equation}
\begin{equation}
N_k(X)=\bigcup_{j=1}^{N}{X^{C_j}}
\end{equation}
\begin{equation}
k=|N_k(X)|=\sum_{j=1}^{N}{|X^{C_j}|}=\sum_{j=1}^{N}{N_{C_j}}
\end{equation}
where $N$ is the number of all classes in $N_k(X)$ and $N_{C_j}$ is the number of objects in the \textit{j}th cluster $X^{C_j}$.

\textbf{Step 3:} For each cluster $X^{C_j}$, estimate $f_{\delta(X)}(X|C)$ by GME or KDE from the corresponding Formulae \ref{GMELPD} or \ref{KDELPD}.

\textbf{Step 4:} The query sample $X$ is classified into the most probable class that has the maximum $N_{C} \cdot f_{\delta(X)}(X|C)$ as expressed by Equation \ref{rule}.

According to the aforementioned processes, an LD-kNN rule assigns the query sample to the class having a maximum posterior probability which is calculated according to the Bayesian theorem based on the local distribution.

\subsection{Analysis}
Formula \ref{rule} can be regarded as a generalized classification form; different classification rules can be generated through selecting different neighborhoods $\delta_C(X)$ for class $C$ or different local distribution estimation methods. From Formula \ref{rule}, the classification decision of LD-kNN has two components, $N_C$ and $f_{\delta_C(X)}(X|C)$, which need to be estimated from the $N_k(X)$. $N_C$ represents quantity information and $f_{\delta_C(X)}(X|C)$ describes the local distribution information in the neighborhood.

\subsubsection{LD-kNN and V-kNN}
The traditional V-kNN rules classify the query sample using only the number of nearest neighbors for each class in the $N_k(X)$ (i.e. $N_{C}$ for the class $C$). Comparing the classification rules of V-kNN and LD-kNN expressed respectively in Formulae \ref{vknn} and \ref{rule}, the LD-kNN rule additionally takes into account the LPD at the query sample (i.e. $f_{\delta_C(X)}(X|C)$) for each class. In other words, the V-kNN rule assumes a constant LPD $f_{\delta_C(X)}(X|C)$ for all classes, i.e., it assumes a uniform distribution in a constant neighborhood for all classes. However, for different classes, the local distributions are not always identical and may play a significant role for classification; the proposed LD-kNN rules take the differences into account through a local probabilistic model.

As exemplified in Fig. \ref{example1}, while the number of samples from Class 1 are less than that from Class 2 in the neighborhood of the query sample, the local distribution of Class 1 estimated from the nearer samples can be more dense around the query sample and can achieve a greater posterior probability than Class 2.

\subsubsection{LD-kNN and DW-kNN}
DW-kNN rules assume that the voting of kNNs should be based on a distance-related weight, because the query sample is more likely to belong to the same class with its nearer neighbors. It is an effective refinement to the V-kNN rule. The DW-kNN can be regarded as a special case of LD-kNN with KDE. Combining the Formulae \ref{rule}, \ref{kde} and \ref{KDELPD}, we can get the rule as
\begin{equation}
     l    = \arg\max_{C}\{\sum_{i=1}^{N_{C}}{w_i^C}\}
\label{kderule}
\end{equation}
where
\begin{equation}
w_i^C=\frac{K((X_i-X)./H_C)}{prod(H_C)\int_{\delta_C(X)}{\hat{f}(X|C)dX}}
\end{equation}

Thus, the LD-kNN can be clearly viewed as a weighted kNN rule with a weight $w_i^C$ assigned to $X_i$ in class $C$; If we further assign $H_C=[1,\cdots,1]$ and omit the differences of $\int_{\delta_C(X)}{\hat{f}(X|C)dX}$ among classes, the weight is related to the kernel function $K(X_i-X)$. Though the kernel function $K(\cdot)$ has the constraint condition that it should ingrate to 1 in the whole space, it does not influence the weight ratio for each sample by a normalization process.

In this sense, LD-kNN with KDE takes into account more information (e.g. the band widths related to classes $H_C$) than DW-kNN. In addition, the local normalization for unity measure as Formula \ref{KDELPD} is a reasonable process, though the differences of $\int_{\delta_C(X)}{\hat{f}(X|C)dX}$ among classes are usually tiny and can be omitted.

As shown in Fig. \ref{example2}, while samples from Class 1 have the same number and the same distances to the query sample as compared to the samples from Class 2, the distribution for the two classes is quite different. According to the local distribution information, LD-kNN can make an effective classification decision through the posterior probability.

\subsubsection{LD-kNN and LC-kNN}
LC-kNN generates the classification rule only based on the local center, which is a part of the local distribution in GME. It can also be derived from LD-kNN with GME with a constant covariance matrix for all classes. If the neighborhood for each class $C$ is selected so that it has a constant number of samples in the training set, i.e., $N_C$ is constant for all classes, the classification rule in Formula \ref{rule} can be transformed as Equation \ref{ld2lc} with a constant covariance matrix $\Sigma$ for GME.
\begin{equation}
\begin{aligned}
     l &= \arg\max_{C}\{f(X;\hat\mu_C,\Sigma)\}   \\
       &= \arg\min_{C}(X-\hat\mu_C)^T\Sigma^{-1}(X-\hat\mu_C)  \\
       &= \arg\min_{C} d_m^2(X,\hat\mu_C)
 \end{aligned}
 \label{ld2lc}
\end{equation}
where $d_m(\cdot)$ is the Mahalanobis distance between two samples. As can be seen, through assuming an equal covariance matrix $\Sigma$ for all classes, LD-kNN with GME can shrink to LC-kNN. If the covariance matrix $\Sigma$ is further assumed the identity matrix, the Mahalanobis distance reduces to the Euclidean distance, then the LD-kNN can be reduced to the CAP or LPC, depending on the estimation of local center.

Taking into account the differences among the local variabilities of all classes, LD-kNN can be an improved version of LC-kNN. For example, in Fig. \ref{example3}, while the local center of Class 2 is closer to the query sample than that of Class 1, there is a greater concentration of samples in Class 1 than that in Class 2 around the query sample and the LD-kNN may calculate a greater posterior probability for Class 1.

\subsubsection{LD-kNN and Bayesian rules}
The Bayesian rules estimate the conditional probability density $f(X|C)$ corresponding to $P(X|C)$ on the whole dataset by a Gaussian model or KDE. However, for a certain sample $X$, the estimation of $f(X|C)$ on the whole dataset usually affected by the noises on the whole dataset, while LD-kNN rules estimate $f(X|C)$ through the local distribution in $\delta_k(X)$ that the noises outside this area can not affect the estimation of $f(X|C)$. And moreover, with the knowledge that near neighbors can usually represent the property of a sample better than the more distant samples, estimating the local distribution in the neighborhood of the query sample to predict its class label is quite feasible.

In fact, LD-kNN can be considered as a compromise between the 1-nearest-neighbor rule and the Bayesian rule. The parameter \textit{k} describes the locality of LD-kNN; when parameter \textit{k} is close to 1, LD-kNN approaches the nearest neighbor rule; if \textit{k} increases to the size of the dataset, then the local area is extended to the whole dataset, and LD-kNN becomes a Bayesian classifier. In this sense, through tuning the parameter \textit{k}, LD-kNN may combine the advantages of the two classifiers and become a more effective method for classification.

\subsubsection{LD-kNN and SVM-kNN}
The SVM-kNN method generates the classification rule based on the optimal decision boundary which can be viewed as the division information extracted from the neighborhood. Both SVM-kNN and LD-kNN can generate a local classification model in a neighborhood, SVM-kNN generates a local SVM model while LD-kNN generates a local probabilistic model. Different from the local probabilistic model in LD-kNN, the local SVM model in SVM-kNN seldom considers the quantity information corresponding to the prior probability of a class in the neighborhood; also, it does not care how the other samples are distributed provided they will not become support vectors and influence the decision boundary. Thus, due to the lack of distribution information, SVM-kNN may be less effective than LD-kNN in the case depicted in Fig. \ref{example4} as the LD-kNN should achieve a greater posterior probability for Class 1, due to its greater prior probability in the neighborhood.

\subsection{Computational complexity} \label{complexity}
In the classification stage, the kNN methods generally: (a) initially compute the distances between the query sample and all the training samples, (b) identify the kNNs based on the distances, (c) organize the information contained in the neighborhood, and (d) generate the classification decision rules. If $m$, $d$ and $k$ respectively denote the number of the training samples, the number of features, and the number of the nearest neighbors. In step (a), if the Euclidean distance is employed, we require $O(md)$ additions and multiplications. In step (b), we require $O(km)$ comparisons for the kNNs. In step (d), these kNN rules all search for the maximum or minimum value in a queue with $c$ values corresponding to the decision values of $c$ classes; this step only requires $O(c)$ comparisons, which can be omitted compared with the other steps.

The LD-kNN method differs from other kNN methods in organizing the information in the neighborhood in step (c). For the step (c), different kNN rules will have different computational cost. If there are $c$ classes in the neighborhood, then the computational complexity of some related kNN rules in step (c) can be easily analyzed and shown in Table \ref{tab:complexity}. From Table \ref{tab:complexity}, the LD-kNN may have somewhat more time complexity than other kNN rules except for LPC. However, in practical problems, we usually have $c< k < m$, and $c \ll m$, thus, compared to the computational complexity in step (a) and (b) the computational cost in step (c) can be omitted. Then, the total computational complexity for LD-kNN is $O(md)$ in terms of addition and multiplication, and $O(km)$ in terms of comparison which is equal to that of the V-kNN.

SVM-kNN has the same computation in steps (a) and (b), while in steps (c) and (d) it trains an SVM model based on the kNNs and predicts the label of the query sample according to the trained model. For a binary SVM, the computational complexity is highly related to the number of support vectors $N_S$. As Burges et al. \cite{burges1998tutorial} described, the computational complexity of training a local SVM is $O(N_S^2+ N_Skd)$ at the best case and $O(k^2d)$ or $O(N_S^3 + N_Skd)$ at the worst case. Since the asymptotical number of support vectors grows linearly with the number of training samples, the computational cost grows between $O(k^2d)$ and $O(k^3+k^2d)$ \cite{bottou2007support}. In the testing phase, if we use the Radial Basis Function (RBF) kernel or linear kernel, the computational complexity is $O(N_Sd)$. Thus, combining the computation in steps (a) and (b), the overall complexity of SVM-kNN can be at least $O(md+km+k^2d)$.

\begin{table}
  \centering
  \caption{The computational complexity for some related kNN rules in the step of neighborhood information organization}
    \begin{tabular}{l|cccc}
    \toprule
    kNN rules & Additions & Multiplications & Comparisons \\
    \midrule
    V-kNN  & O(k)  &       & O(kc) \\
    LD-kNN(GME) & O(kd) & O(kd) & O(kc) \\
    LD-kNN(KDE)  & O(kd) & O(kd) & O(kc) \\
    DW-kNN$^{1}$  & O(k)  & O(k)  & O(kc) \\
    CAP$^2$   & O(kd) & O(dc) & O(mc)\\
    LPC$^3$    & O(kd) & O(kd) & O(mc)  \\
    \bottomrule

    \end{tabular}%
  \label{tab:complexity}%
  \flushleft

  $^1$ Here, we refer to the weighting function as Equations \ref{weight1} and \ref{weight2} which have equal computational complexities.

  $^2$ For CAP and LPC, the dataset should be grouped before the neighborhood information organization step; the computational  complexity is $O(mc)$.

  $^3$ LPC should have an additional process of estimating the related probability for each training sample, which can be achieved offline. This process will take the major computational cost $O(k_0m^2)$ if using the method described in \cite{li2008nearest}.
\end{table}%

\section{Experiments}\label{sec:experiments}
To study the performance characteristics of the proposed LD-kNN method we have conducted four sets of experiments using synthetic and real datasets. Experiment I studies the influence of neighborhood size on the performance of kNN-based classifiers; Experiment II studies the scalability of dimension for the proposed method; Experiment III studies the efficiency of LD-kNN rules; Experiment IV studies the classification performance of LD-kNN rules for real classification problems.

\subsection{Experimental datasets}
To evaluate the LD-kNN approach, the datasets used in our experiments contain 27 real datasets and four types of synthetic datasets.

\subsubsection{The synthetic datasets}
Each of the four types of synthetic datasets (denoted as T1, T2, T3, T4) is used for a two-class classification problem. There are at least three advantages in using synthetic datasets \cite{mitani2006local}: (1) the size of training and test sets can be controlled; (2) the dimension of the dataset and the correlation of the attributes can be controlled; and (3) for any two-class problem, the true Bayes error can be obtained \cite{fukunaga1969calculation}. If a \textit{p}-dimensional sample is denoted as $(x_1,\cdots,x_{p-1},y)$, the four types of synthetic datasets are described as follows.

In T1 datasets, the data samples of the two classes are in a uniform distribution in two adjacent regions divided by a nonlinear boundary: $y=\frac{1}{p-1}\sum_{i=1}^{p-1}{\sin(x_i)}$. The uniform region is a hyper-rectangle area: $0 \leq {x_i} \leq 2\pi, -2 \leq y \leq 2, i=1,\cdots,p-1$.

In the T2, T3 and T4 datasets the data samples of the two classes are from two multivariate Gaussian distributions $N(\mu_1,\Sigma_1)$ and $N(\mu_2,\Sigma_2)$.

In the T2 dataset, the two Gaussian distributions have the same diagonal covariance matrix but different mean vectors, i.e.,
\begin{equation}
\begin{split}
&\mu_1=[\textbf{0}_{p-1},-1], \mu_2=[\textbf{0}_{p-1},1]  \\
&\Sigma_1=\Sigma_2=I_p
\end{split}
\end{equation}
where $I_p$ denotes the \textit{p}-dimensional identity matrix, $\textbf{0}_{p}$ denotes a \textit{p}-dimensional zero vector.

In the T3 dataset, the two Gaussian distributions have the same mean vector but different diagonal covariance matrixes,
\begin{equation}
\begin{split}
&\mu_1=\mu_2=\textbf{0}_{p}\\
&\Sigma_1=I_p,\Sigma_2=4I_p
\end{split}
\end{equation}

In the T4 dataset, the two Gaussian distributions have different mean vectors and non-diagonal covariance matrixes, i.e.,
\begin{equation}
\begin{split}
&\mu_1=[\textbf{0}_{p-1},-1], \mu_2=[\textbf{0}_{p-1},1]\\
&\Sigma_1=\textbf{1}_p+I_p,\Sigma_2=\textbf{1}_p+3I_p
\end{split}
\end{equation}
where $\textbf{1}_p$ denotes a \textit{p}-dimensional square matrix with all components 1, $\Sigma_1$ and $\Sigma_2$ are non-diagonal matrixes with the corresponding main diagonal components 2 and 4, and with the remaining components equal to 1.

\subsubsection{The real datasets}
The real datasets are selected from the well-known UCI-Irvine repository of machine learning datasets \cite{Bache+Lichman:2013}. Because the estimation of probability density is only for numerical attributes, the attributes of the selected datasets are all numerical. These datasets cover a wide area of applications including life, computer, physical and business domains. The datasets include 13 two-class problems and 14 multi-class problems. Table \ref{tab:datainfo} summarizes the relevant information for these datasets.

\begin{table}
  \centering
  \caption{Information about the real datasets}
    \begin{tabular}{lcccl}
    \toprule
    Datasets & \#Instances    & \#Attributes    & \#Classes    & Area \\
    \midrule
    Blood & 748   & 4     & 2     & Business \\
    BupaLiver & 345   & 6     & 2     & Life \\
    Cardio1 & 2126  & 21    & 10    & Life \\
    Cardio2 & 2126  & 21    & 3     & Life \\
    Climate & 540   & 18    & 2     & Physical \\
    Dermatology & 366   & 33    & 6     & Life \\
    Glass & 214   & 9     & 7     & Physical \\
    Haberman & 306   & 3     & 2     & Life \\
    Heart & 270   & 13    & 2     & Life \\
    ILPD  & 583   & 10    & 2     & Life \\
    Image & 2310  & 19    & 7     & Computer \\
    Iris  & 150   & 4     & 3     & Life \\
    Leaf  & 340   & 14    & 30    & Computer \\
    Pageblock & 5473  & 10    & 5     & Computer \\
    Parkinsons & 195   & 22    & 2     & Life \\
    Seeds & 210   & 7     & 3     & Life \\
    Sonar & 208   & 60    & 2     & Physical \\
    Spambase & 4601  & 57    & 2     & Computer \\
    Spectf & 267   & 44    & 2     & Life \\
    Vehicle & 846   & 18    & 4     & Physical \\
    Vertebral1 & 310   & 6     & 2     & Life \\
    Vertebral2 & 310   & 6     & 3     & Life \\
    WBC   & 683   & 9     & 2     & Life \\
    WDBC  & 569   & 30    & 2     & Life \\
    Wine  & 178   & 13    & 3     & Physical \\
    WinequalityR & 1599  & 11    & 6     & Business \\
    WinequalityW & 4898  & 11    & 7     & Business \\
    \bottomrule
    \end{tabular}%
    \flushleft

  \label{tab:datainfo}%
\end{table}%

\subsection{Experimental settings}
The shared experimental settings for all the four experiments are described as follows. The unique settings for each experiment are introduced in the corresponding subsection.

\subsubsection{The distance function}
While different classification problems may utilize different distance functions, in our experiments we use Euclidean distance to measure the distance between two samples.

\subsubsection{The normalization}
To prevent attributes with an initially large range from inducing bias by out-weighing attributes with initially smaller ranges, in our experiments, all the datasets are normalized by a z-score normalization that linearly transforms each of the numeric attributes of a dataset with mean 0 and standard deviation 1.

\subsubsection{The Parameter}
The parameter \textit{k} in kNN-based classifiers indicates the number of nearest neighbors. In our experiments, we use the average number of nearest neighbors per class (denoted by $kpc$) as the parameter, i.e., we totally search $kpc*N_C$ nearest neighbors, where $N_C$ is the number of classes in the corresponding dataset.

\subsubsection{The performance evaluation}
The performance of a classifier usually contains two aspects, effectiveness and efficiency. The effectiveness is related to the data distribution in a dataset; and the efficiency is usually related to the size of a dataset. The misclassification rate (MR) and F1-score (F1) are employed to assess the effectiveness of a classifier; a smaller MR or a greater F1 denotes better effectiveness. In Experiment I and II, due to the simplicity of synthetic datasets, we only use MR to evaluate the effectiveness; while both MR and F1 are calculated in the real datasets in Experiment IV. The efficiency is described by the time consumed in a classification task in Experiment III.

For Experiments I, II and IV, to express the generalization capacity, the training samples and the test samples should be independent. In our research we use stratified 5-fold cross validation to estimate the effectiveness indicator (MR or F1) of a classifier on each dataset. In stratified 5-fold cross validation, the data are randomly stratified into 5 folds. Among the 5 folds, one is used as test set, and the remaining folds are used as the training set. We perform the classification process 5 times, each time using a different test set and the corresponding training set. To avoid bias, we repeat the cross validation process on each dataset ten times and calculate the average MR (AMR) or average F1 (AF1) to evaluate the effectiveness of a classifier.

\subsection{Experiment I }
The purpose of Experiment I is to investigate the influence of the parameter \textit{kpc} on effectiveness of kNN-based classifiers. This investigation is designed to guide us in the selection of the optimal parameter \textit{kpc} for classification. The 4 types of synthetic datasets with various dimensions and several real datasets are employed in this experiment. For each synthetic dataset there are 500 samples for each of the two classes. Additionally, we set the dimension of synthetic datasets in \{2, 5, 10\} to investigate the influence of dimension on \textit{kpc}'s selection.

In this experiment, two LD-kNN rules are implemented using GME and KDE respectively. Some other kNN-based rules are also employed as controls, including V-kNN, DW1-kNN, DW2-kNN, CAP, LPC, SVM-kNN(RBF), and SVM-kNN(Poly). \footnote{In the rest of this article, LD-kNN(GME) and LD-kNN(KDE) denote the LD-kNN rules with the local distribution estimated using GME and KDE respectively; DW1-kNN and DW2-kNN denote the distance weighted kNN rules with the weight function Eq. \ref{weight1} and Eq. \ref{weight2} respectively; SVM-kNN(RBF) and SVM-kNN(Poly) denote the SVM-kNN rules with the respective kernel RBF and Polynomial for SVM.}

The AMR of the kNN-based classifiers on synthetic datasets varied with respect to parameter \textit{kpc} are depicted in Fig. \ref{fig_kpc2} where each subfigure represents the performances of the kNN-based classifiers on a specified type of dataset with a specified dimension. Each row in in Fig. \ref{fig_kpc2} denotes a type of synthetic dataset, T1 to T4 from top to bottom. A column in Fig. \ref{fig_kpc2} denotes the dimension of the synthetic dataset; the columns from left to right represent the dimension p=2, p=5 and p=10 respectively.

\begin{figure*}[!t]
\centering
\subfloat{\includegraphics[width=0.9\textwidth]{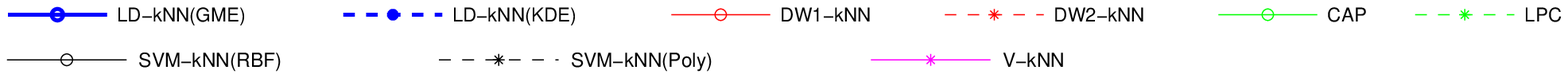}}

\addtocounter{subfigure}{-1}

\subfloat[T1 dataset (p=2)]{\includegraphics[width=0.33\textwidth]{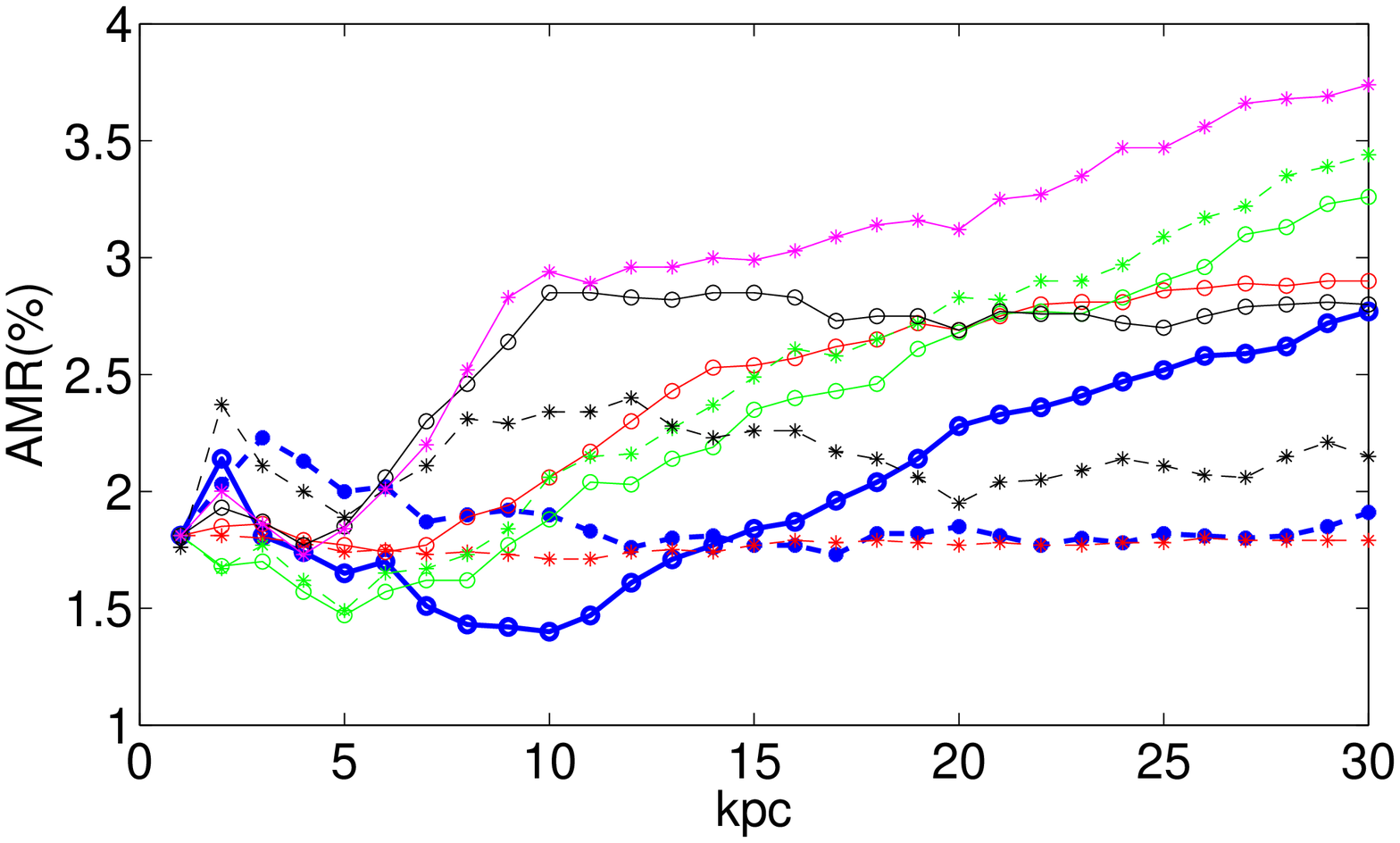} \label{T2_2}}
\subfloat[T1 dataset (p=5)]{\includegraphics[width=0.33\textwidth]{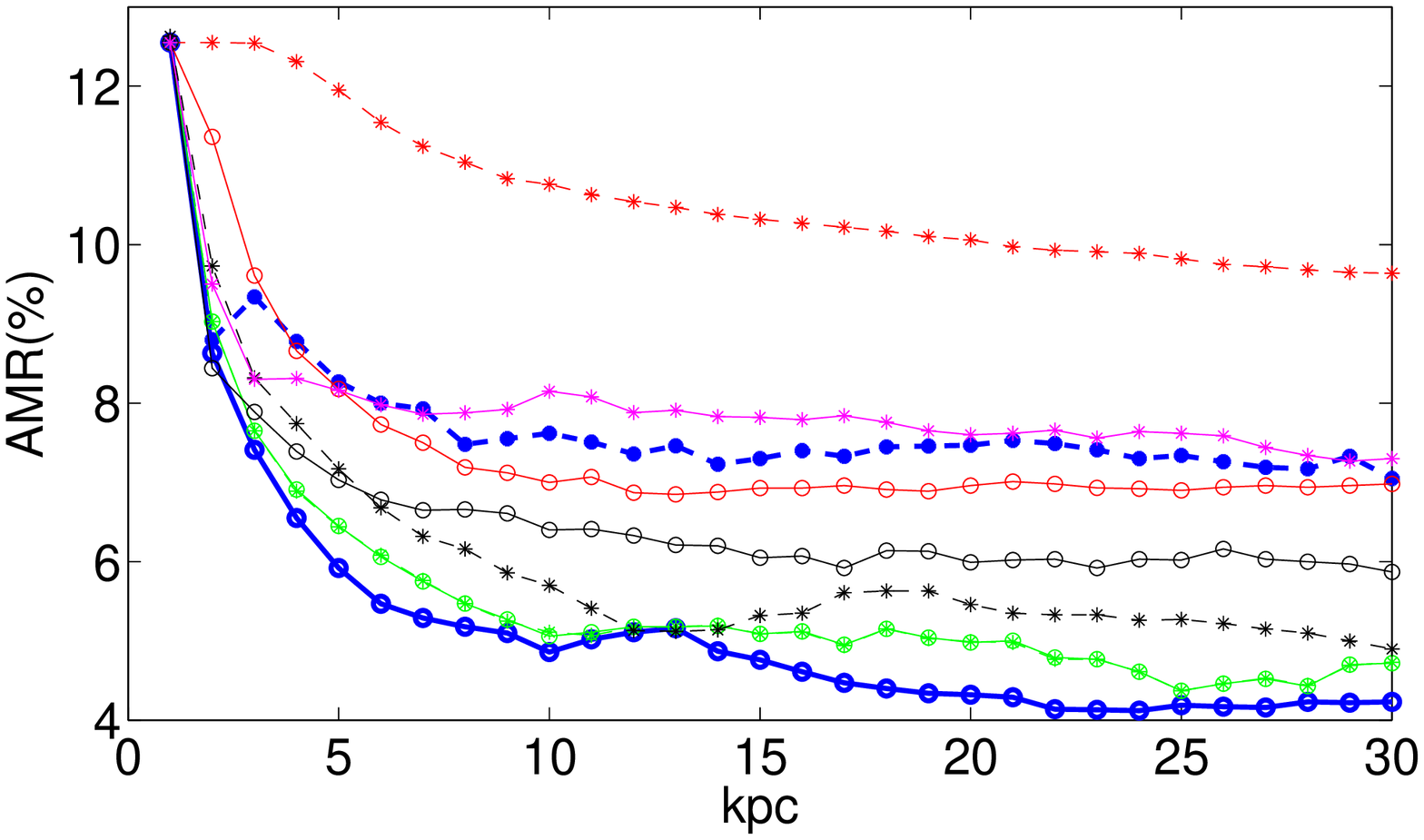} \label{T2_5}}
\subfloat[T1 dataset (p=10)]{\includegraphics[width=0.33\textwidth]{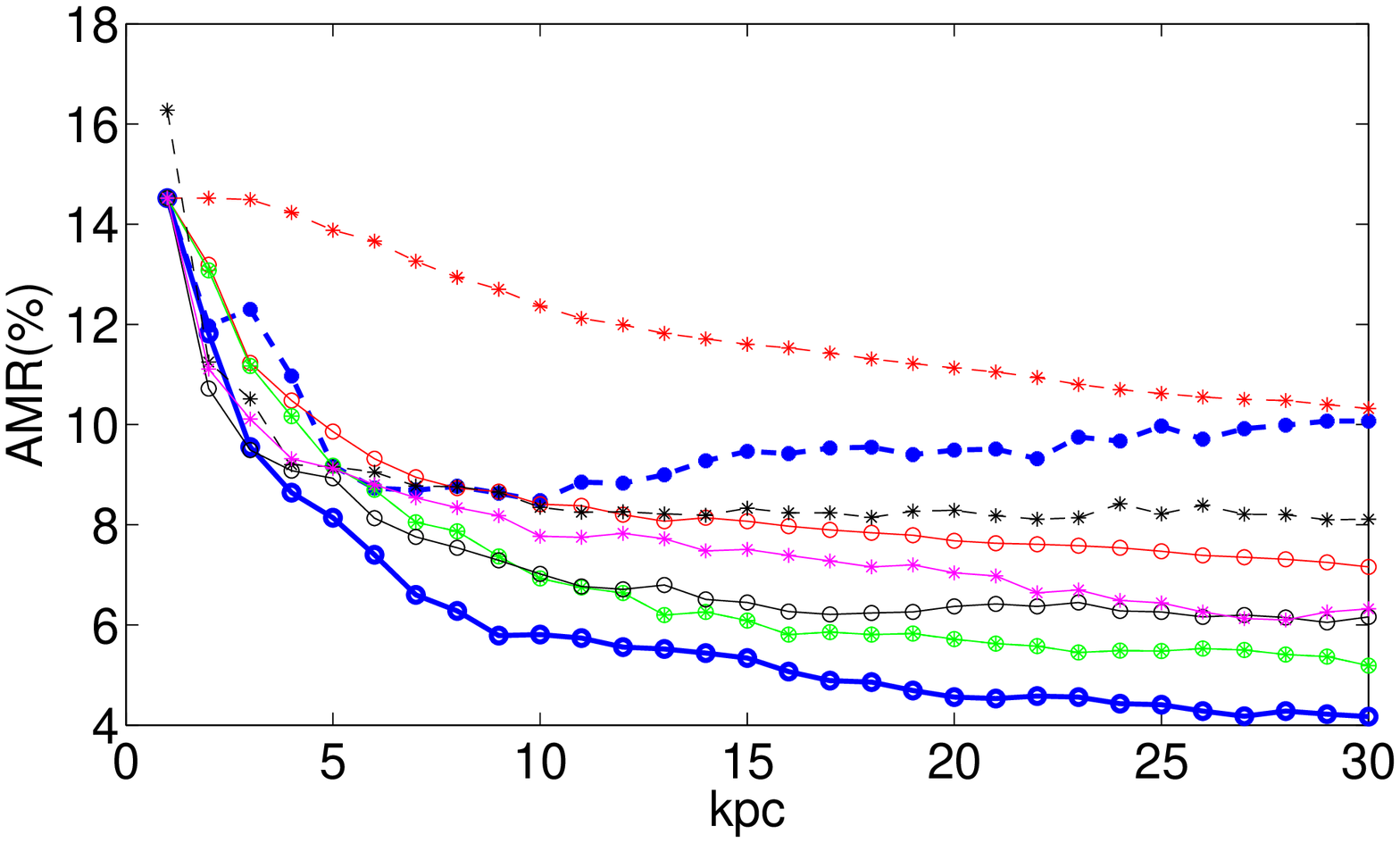} \label{T2_10}}

\subfloat[T2 dataset (p=2)]{\includegraphics[width=0.33\textwidth]{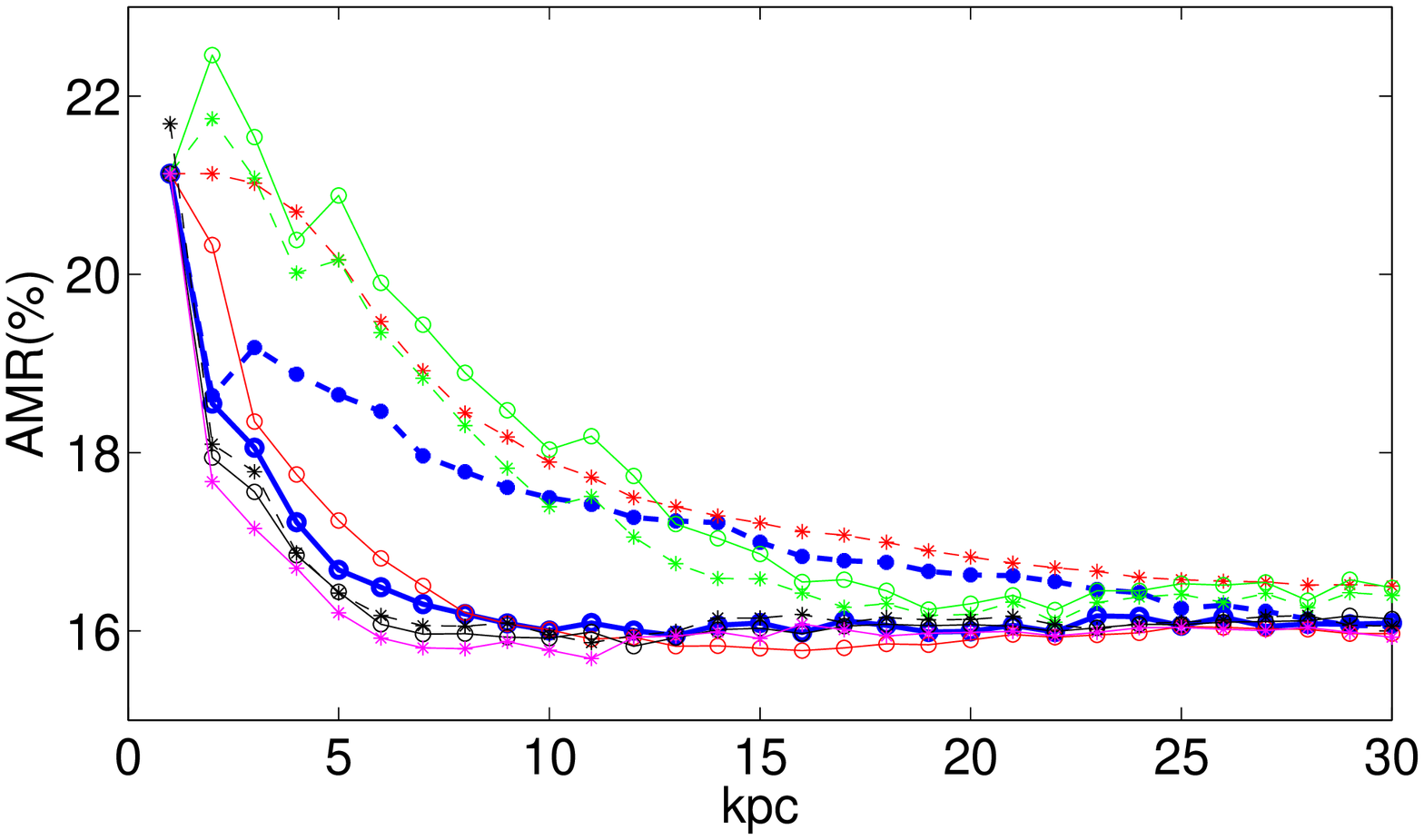}}
\subfloat[T2 dataset (p=5)]{\includegraphics[width=0.33\textwidth]{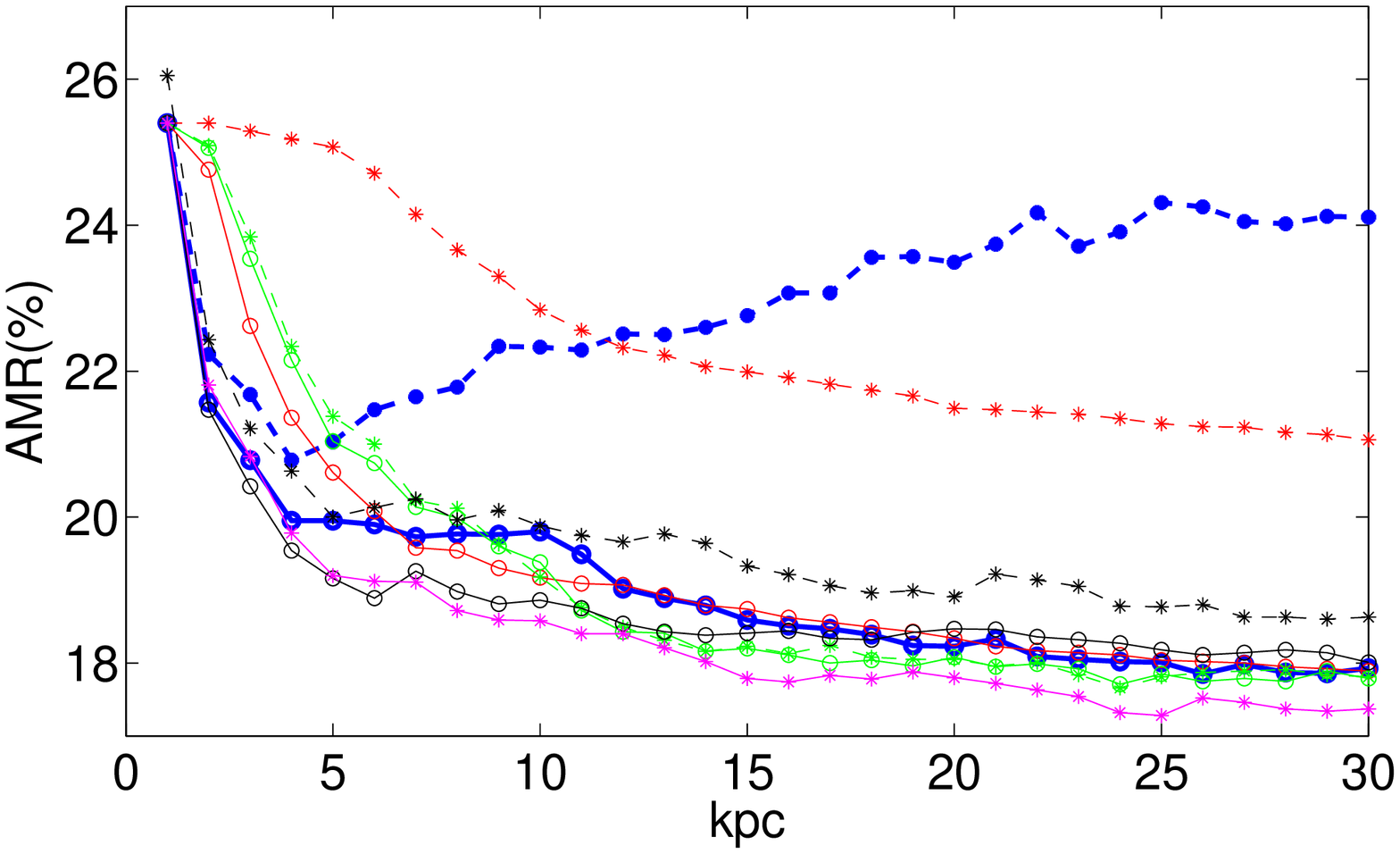}}
\subfloat[T2 dataset (p=10)]{\includegraphics[width=0.33\textwidth]{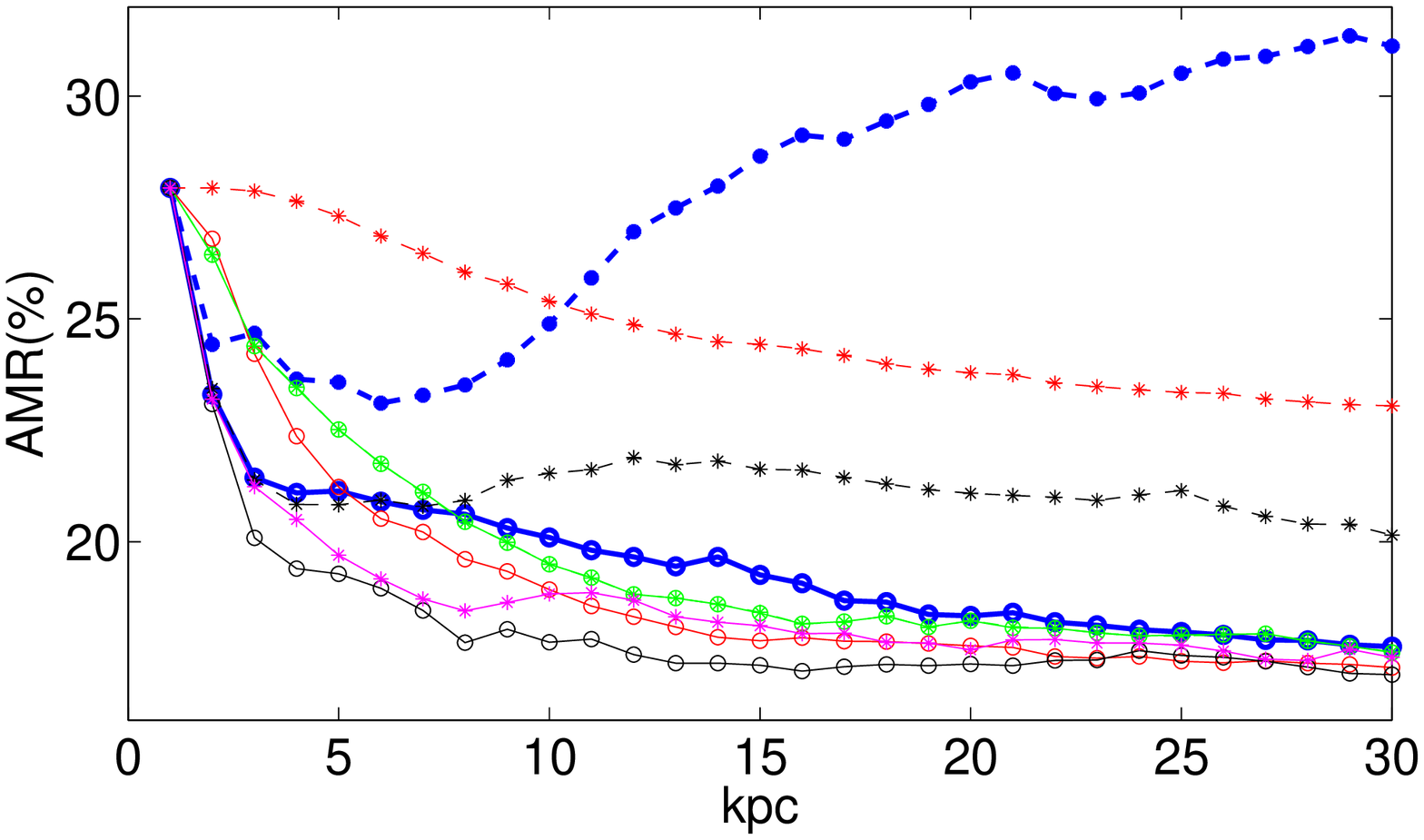}}

\subfloat[T3 dataset (p=2)]{\includegraphics[width=0.33\textwidth]{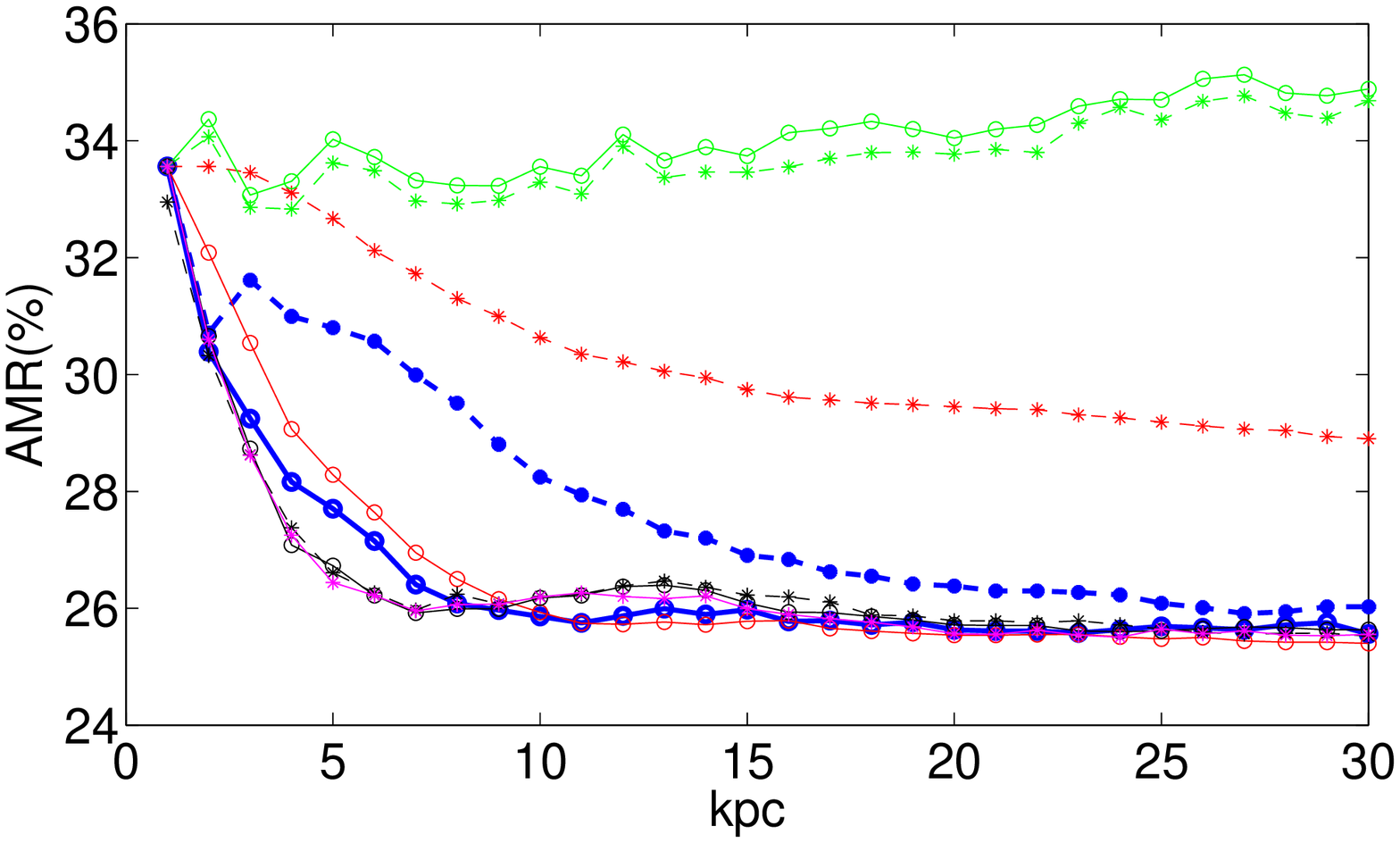}}
\subfloat[T3 dataset (p=5)]{\includegraphics[width=0.33\textwidth]{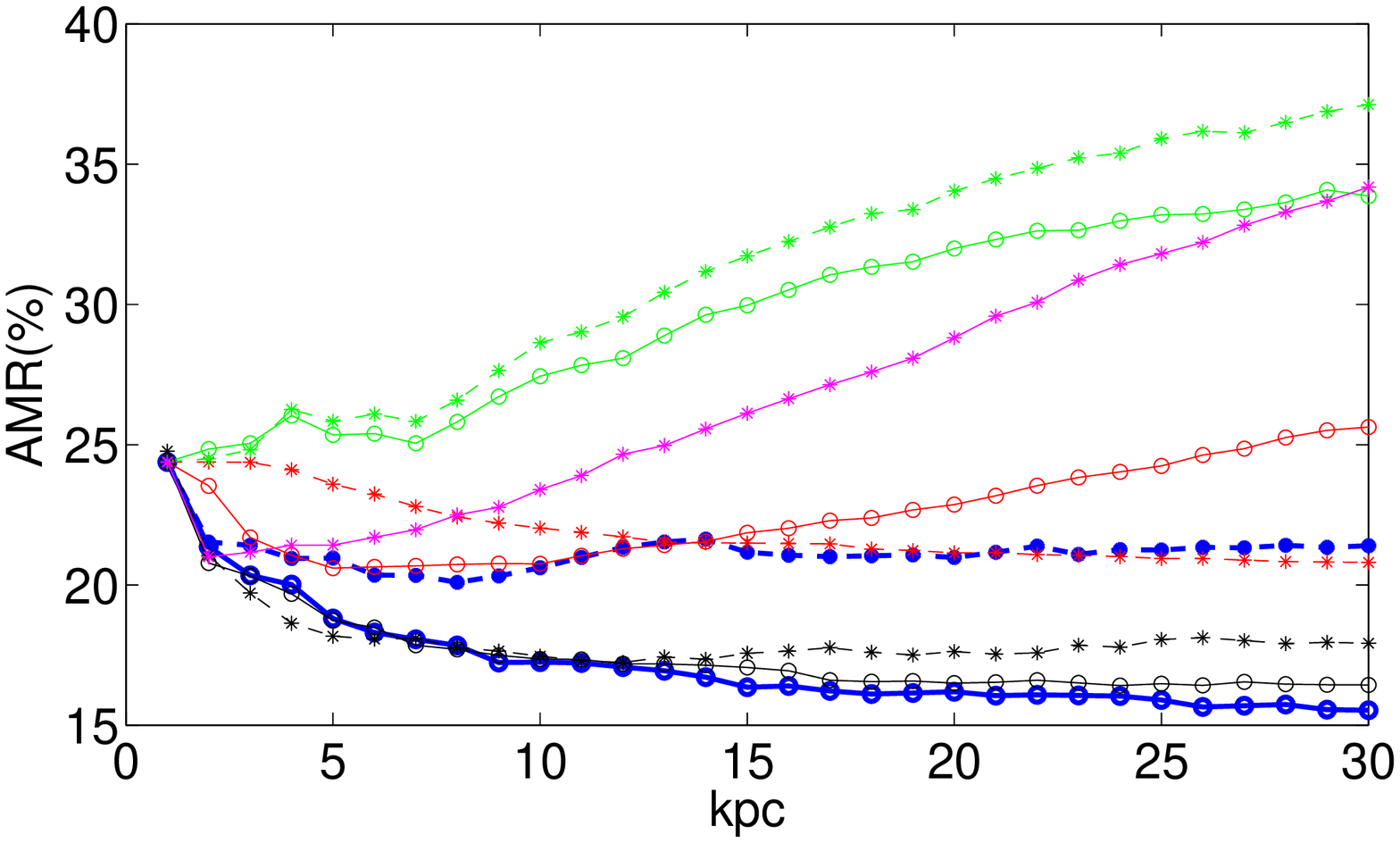}}
\subfloat[T3 dataset (p=10)]{\includegraphics[width=0.33\textwidth]{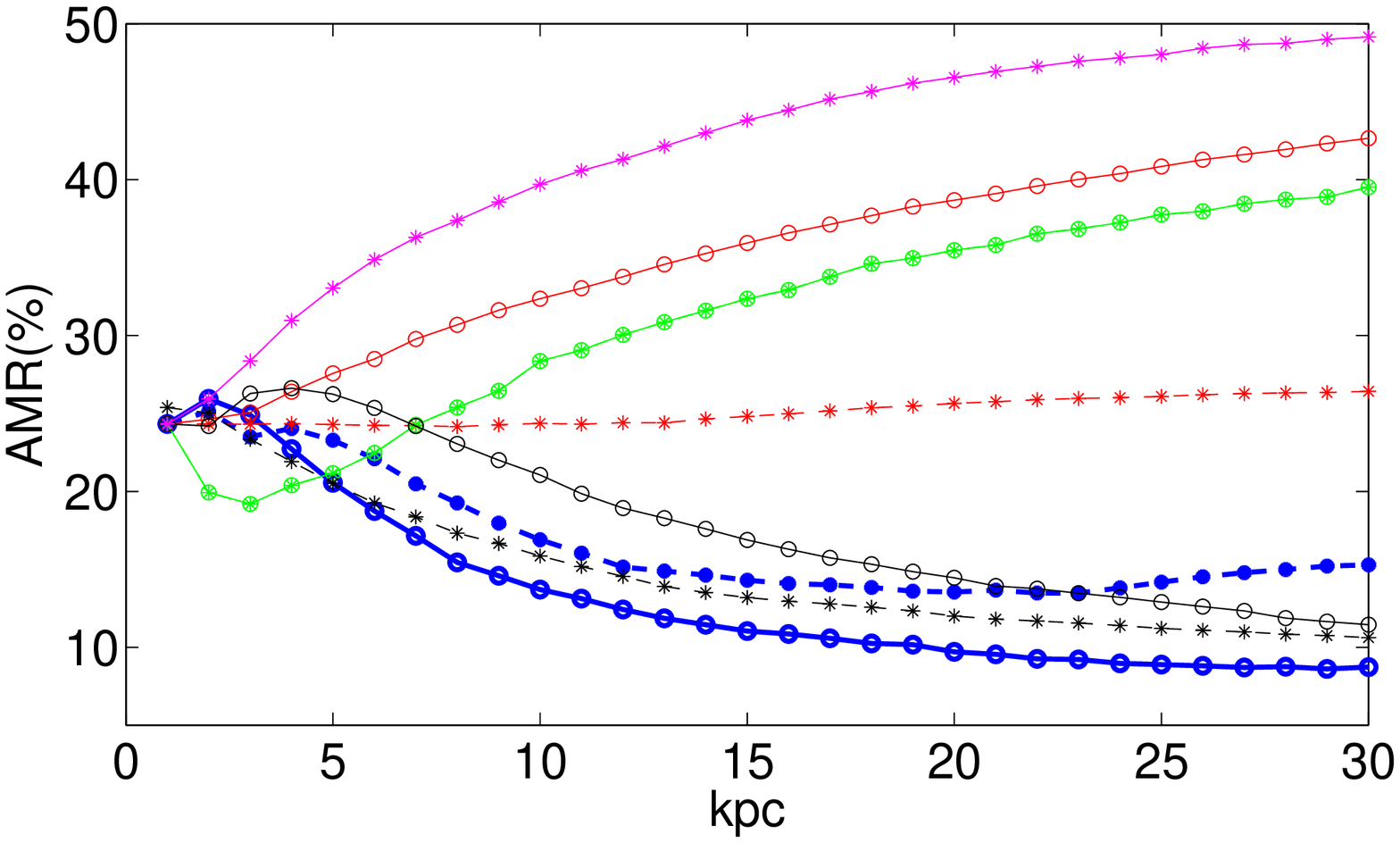}}

\subfloat[T4 dataset (p=2)]{\includegraphics[width=0.33\textwidth]{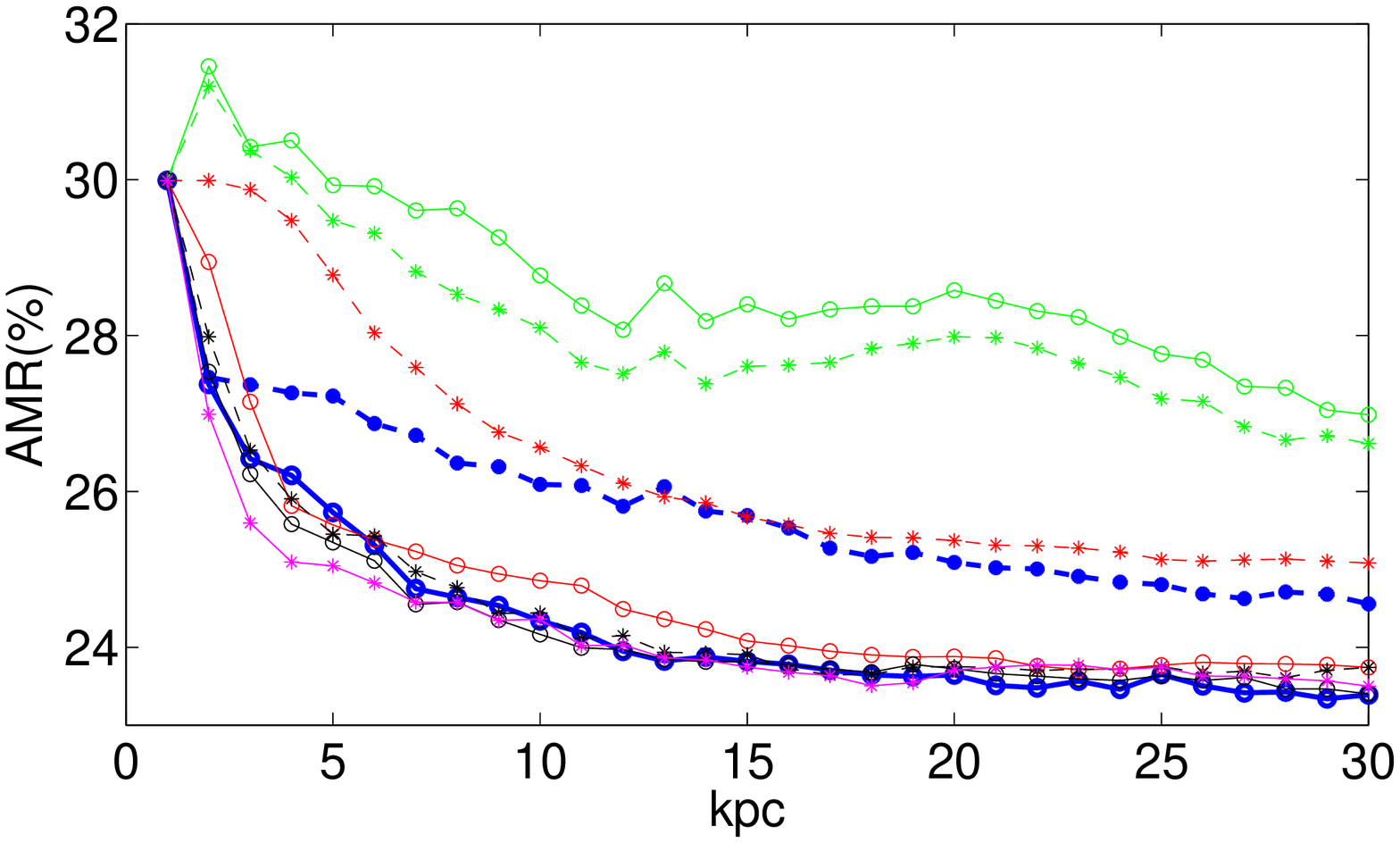}}
\subfloat[T4 dataset (p=5)]{\includegraphics[width=0.33\textwidth]{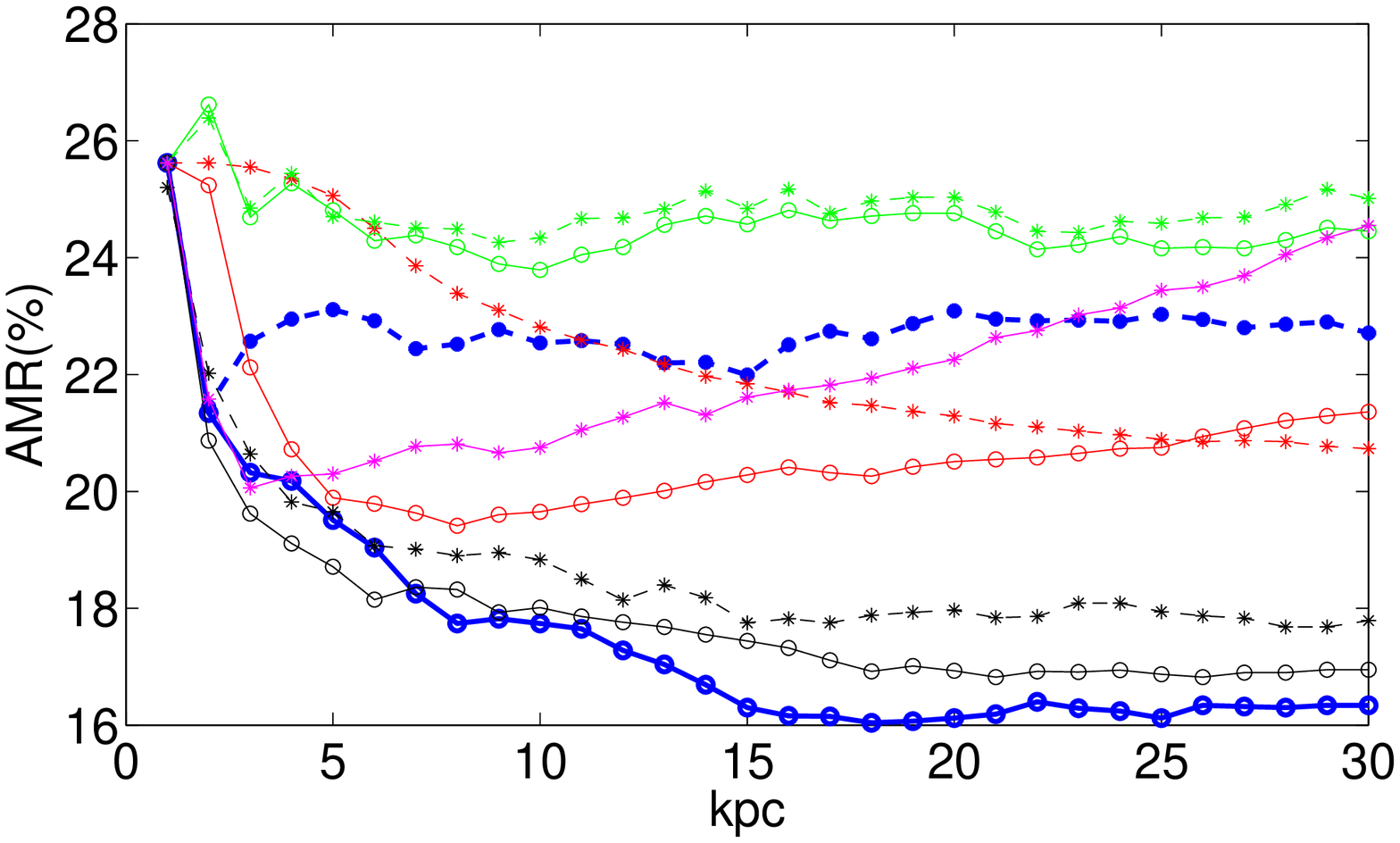}}
\subfloat[T4 dataset (p=10)]{\includegraphics[width=0.33\textwidth]{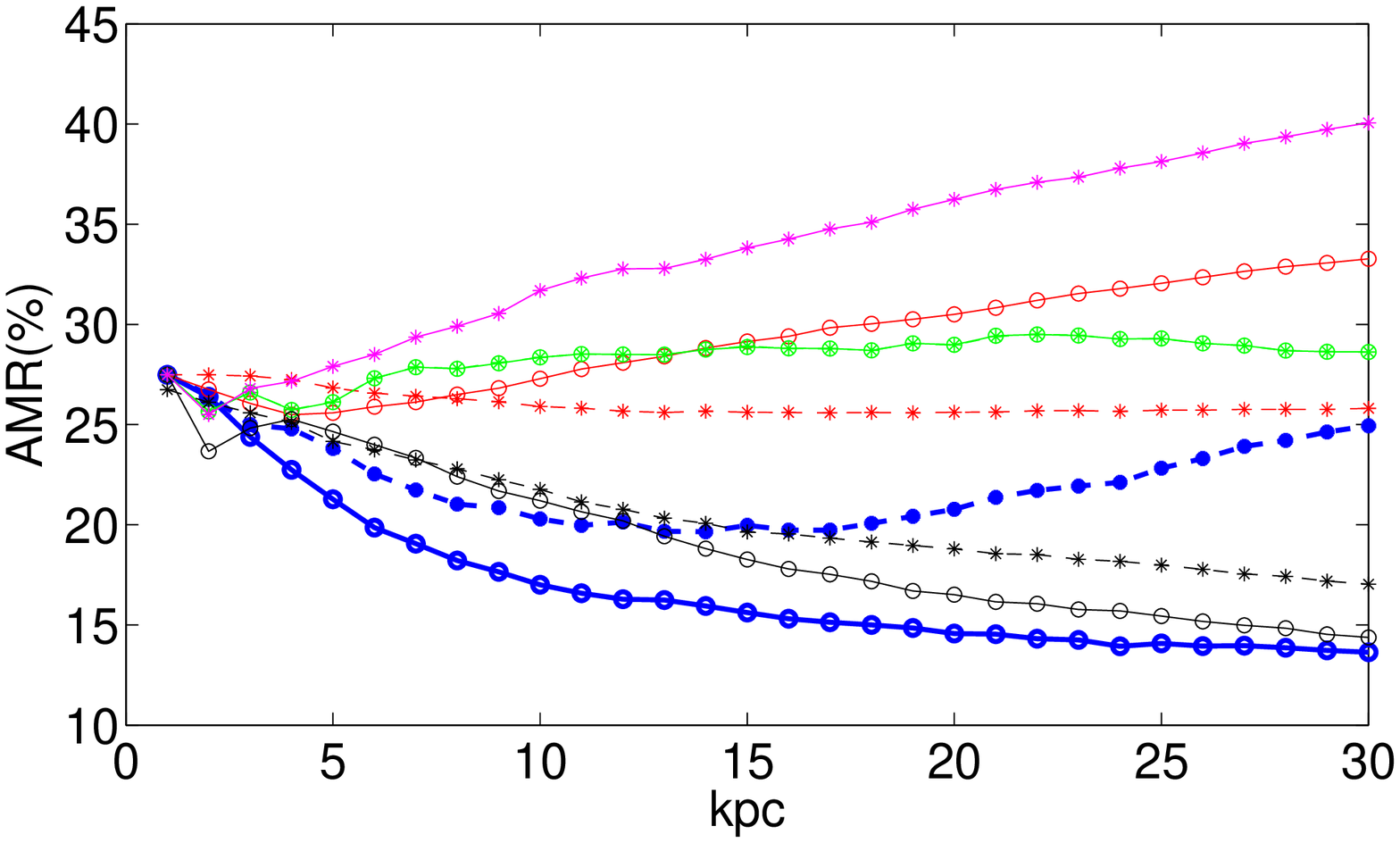}}


\caption{The performance curves of the kNN-based classifiers with respect to \textit{kpc} on the 4 types of synthetic datasets and several real datasets.}
\label{fig_kpc2}
\end{figure*}

\begin{figure*}[!t]
\centering
\subfloat{\includegraphics[width=0.9\textwidth]{legendk}}

\addtocounter{subfigure}{-1}
\subfloat[Bupaliver]{\includegraphics[width=0.33\textwidth]{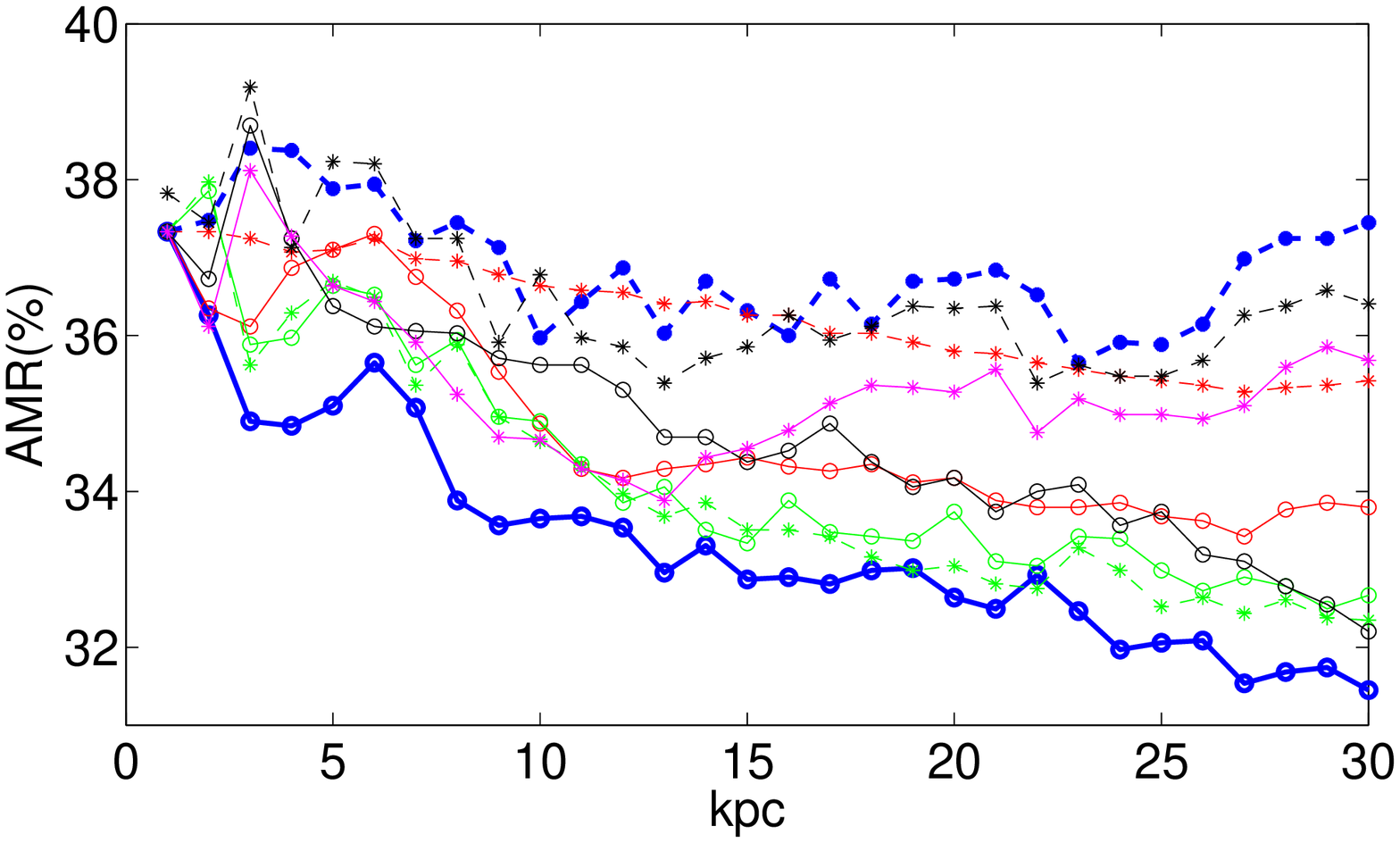}}
\subfloat[Climate]{\includegraphics[width=0.33\textwidth]{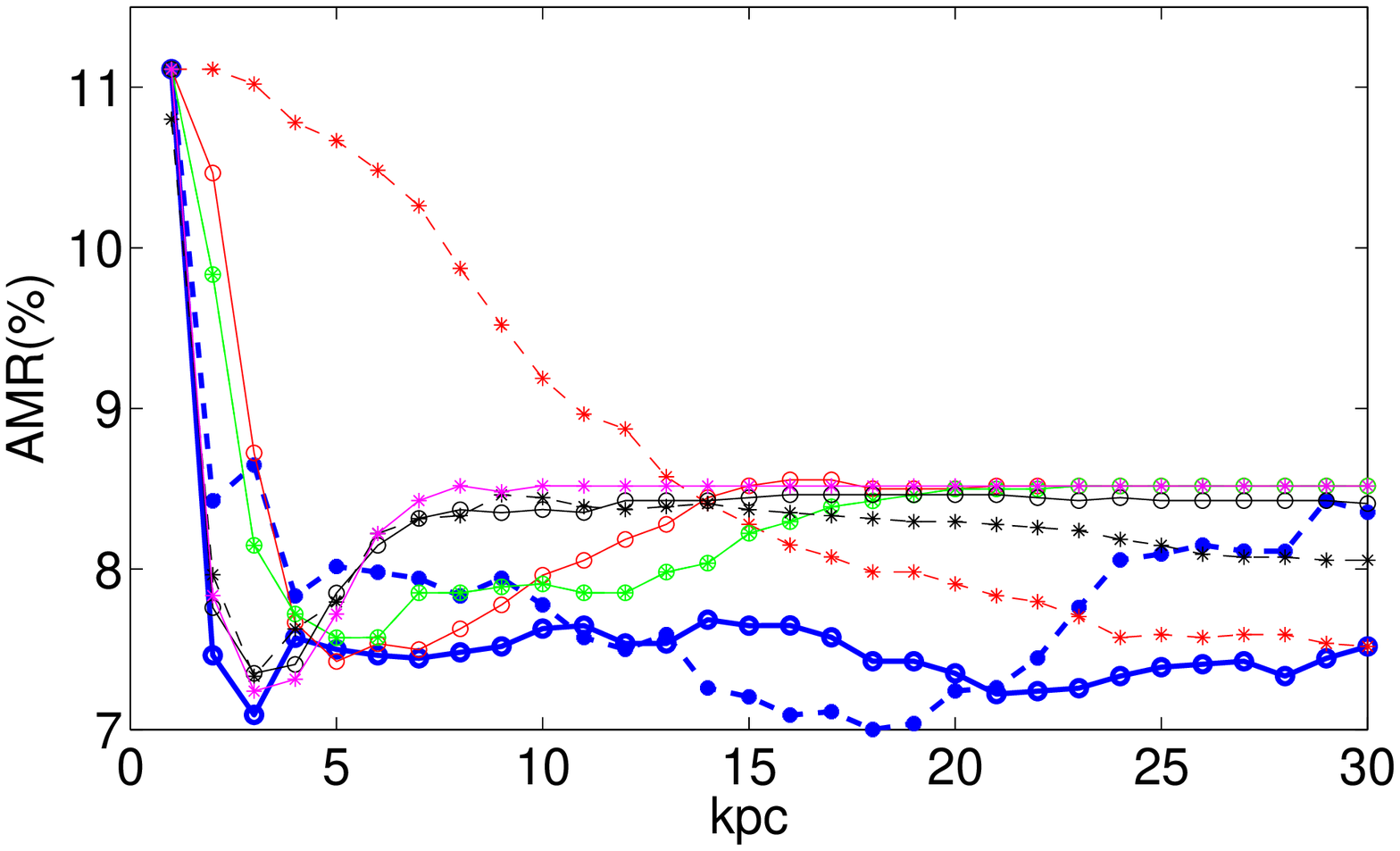}}
\subfloat[Sonar]{\includegraphics[width=0.33\textwidth]{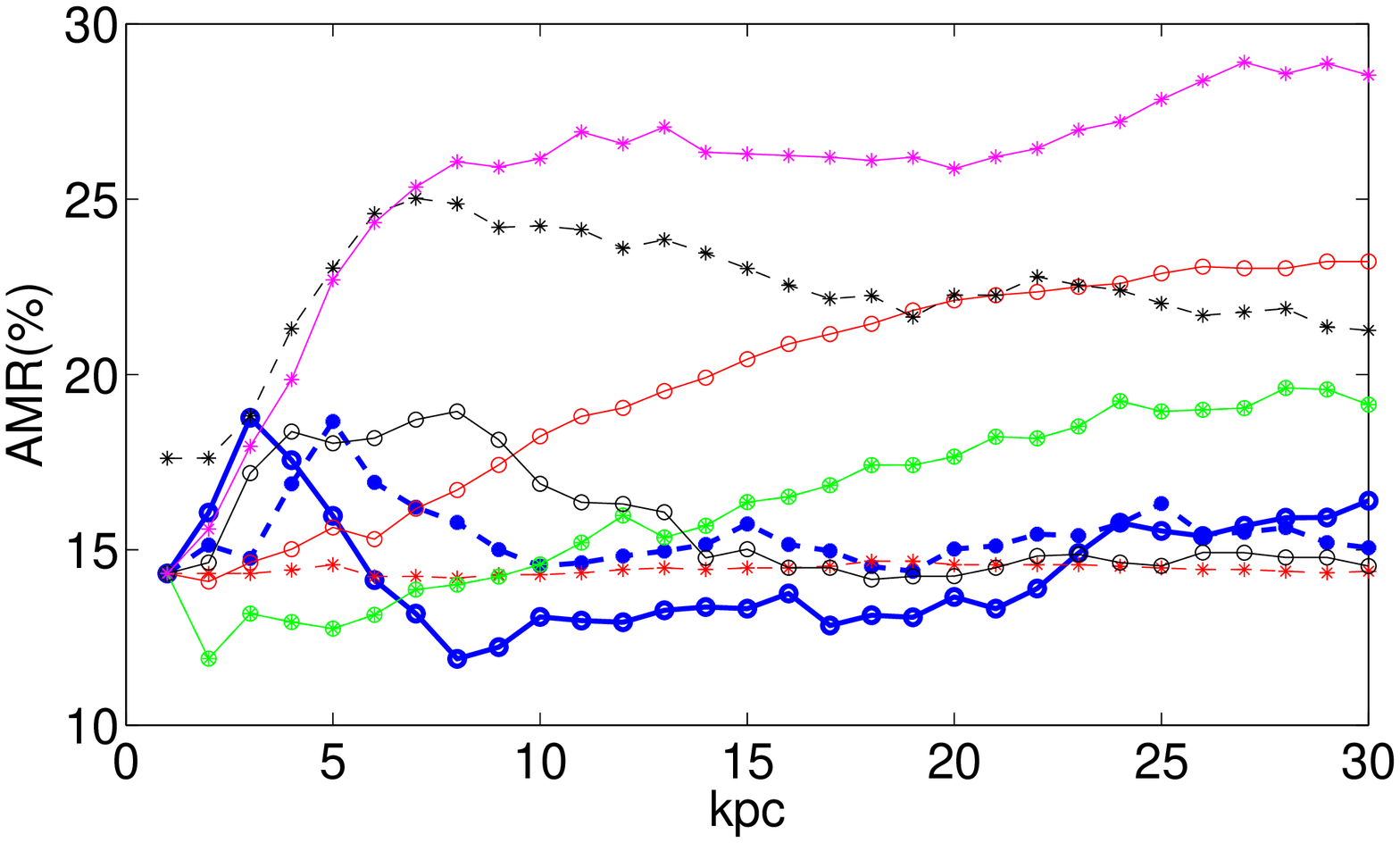}}

\subfloat[Vertebral2]{\includegraphics[width=0.33\textwidth]{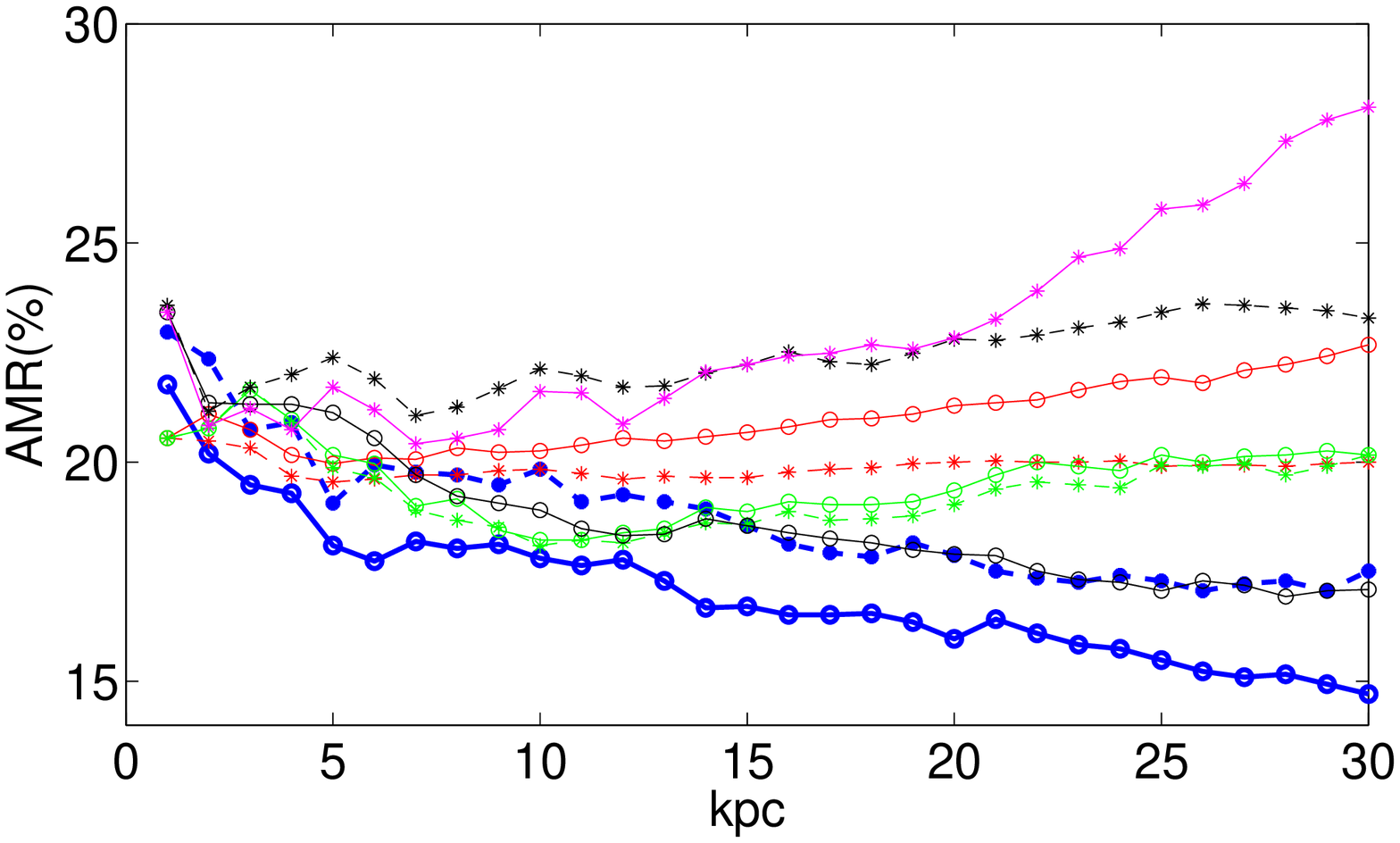}}
\subfloat[Vehicle]{\includegraphics[width=0.33\textwidth]{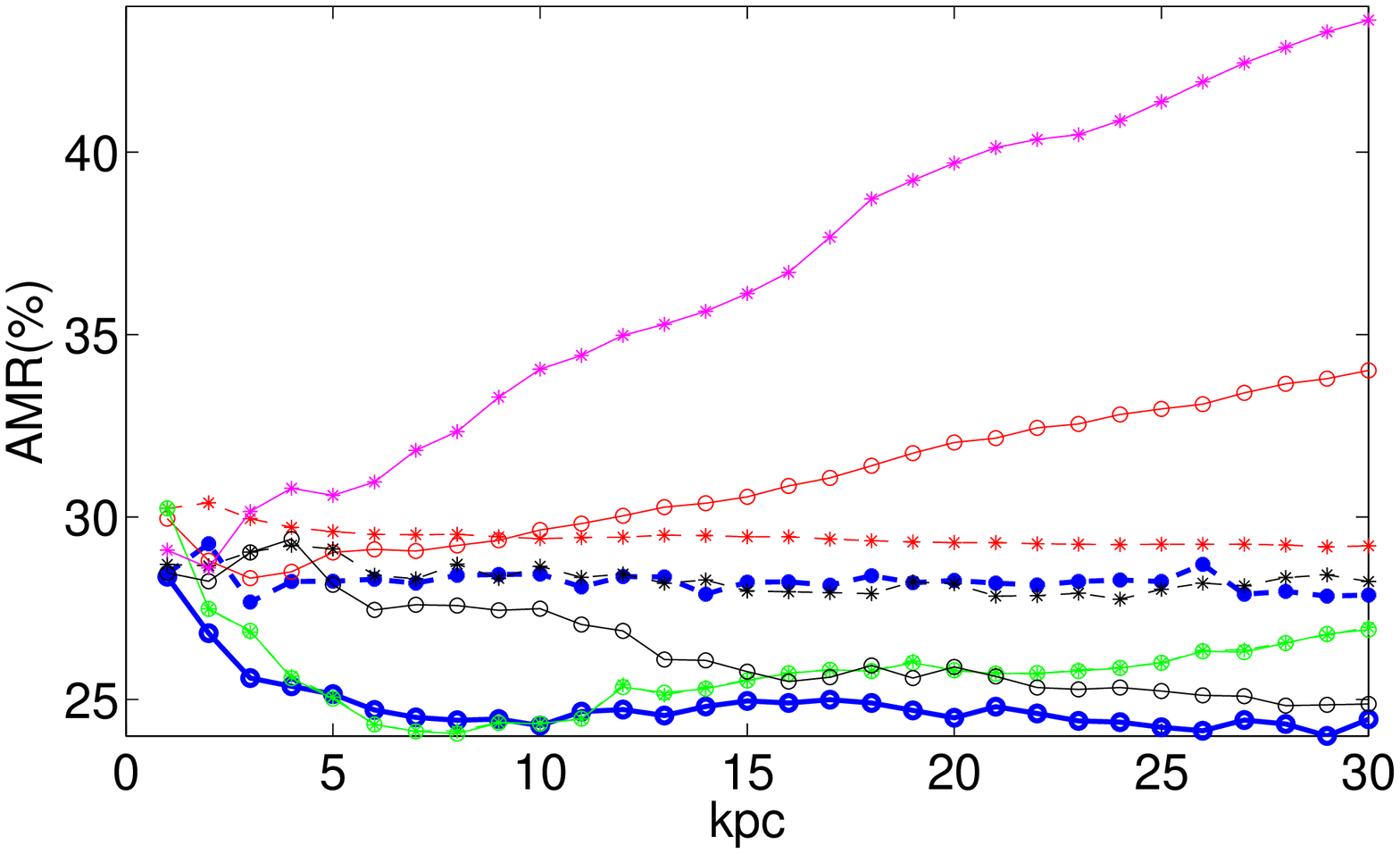}}
\subfloat[Wine]{\includegraphics[width=0.33\textwidth]{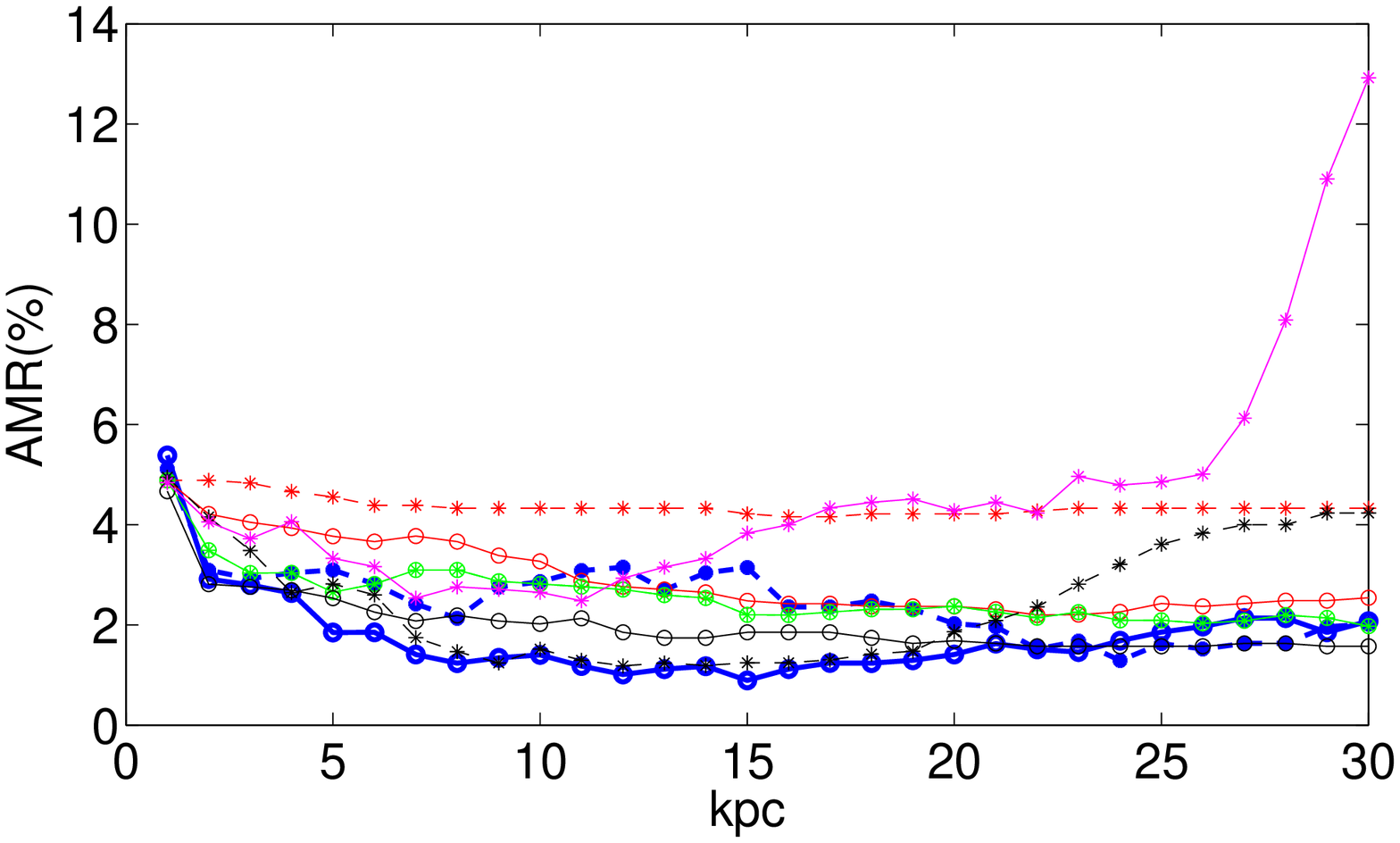}}

\caption{The performance curves of the kNN-based classifiers with respect to \textit{kpc} on the 4 types of synthetic datasets and several real datasets.}
\label{fig_kpc_real}
\end{figure*}

For the T1 dataset, the complex boundary may make the neighborhood of samples around the boundary be dominated by samples from the other class; this will frustrate the kNN rules, especially the neighborhood is large. A small neighborhood can alleviate this dominance, but easily suffers from randomness of samples. Thus, the optimal choice of \textit{kpc} should not be too small or too large, as depicted in Fig. \ref{T2_2}. However, in high-dimensional cases, due to the sparsity of samples, very few points will be sampled around the boundary, the dominance from a different class should be weaker, and a large neighborhood may reduce the impact of the randomness of samples; thus a larger neighborhood is favorable, as shown in \ref{T2_5} and \ref{T2_10}. It can also be seen that the LD-kNN(GME) with an appropriate neighborhood size is more effective than other classifiers. The LD-kNN(KDE) is also as effective as LD-kNN(GME) in low dimensional cases (p=2); while in high dimensional cases (p=5 and p=10), it can not be so effective as LD-kNN(GME). It can be explained that, in low dimensional cases, the samples are enough for the nonparametric estimation of local distribution; while in high dimensional cases, due to the sparsity of samples the nonparametric estimation as KDE may fail to achieve an effective estimation as a parametric estimation as GME.

For the T2, T3 and T4 datsets, we can see that LD-kNN(GME) usually favors a larger neighborhood size and that LD-kNN(GME) is more effective than other kNN rules in most cases. In low-dimensional cases, the samples are dense, other kNN rules can also effectively simulate the local distribution; thus in Fig. \ref{fig_kpc2} the advantages of LD-kNN(GME) over other kNN rules are not so significant when p=2, while more significant when p=5 or 10. In T2 dataset, the Bayesian decision boundary is linear and the distributions of the two classes are symmetrical about the boundary; most of the kNN rules can be effective for this simple classification problem, LD-kNN(GME) can not show any advantages over some other kNN rules. The LD-kNN(KDE) does not perform so effectively as LD-kNN(GME) on these types of datasets. We believe that assuming a simple model such as Gaussian model in a small local region can be more effective than a more complicated probabilistic model as estimating the local distribution through KDE.

\begin{figure*}[!t]
\centering
\subfloat{\includegraphics[width=0.9\textwidth]{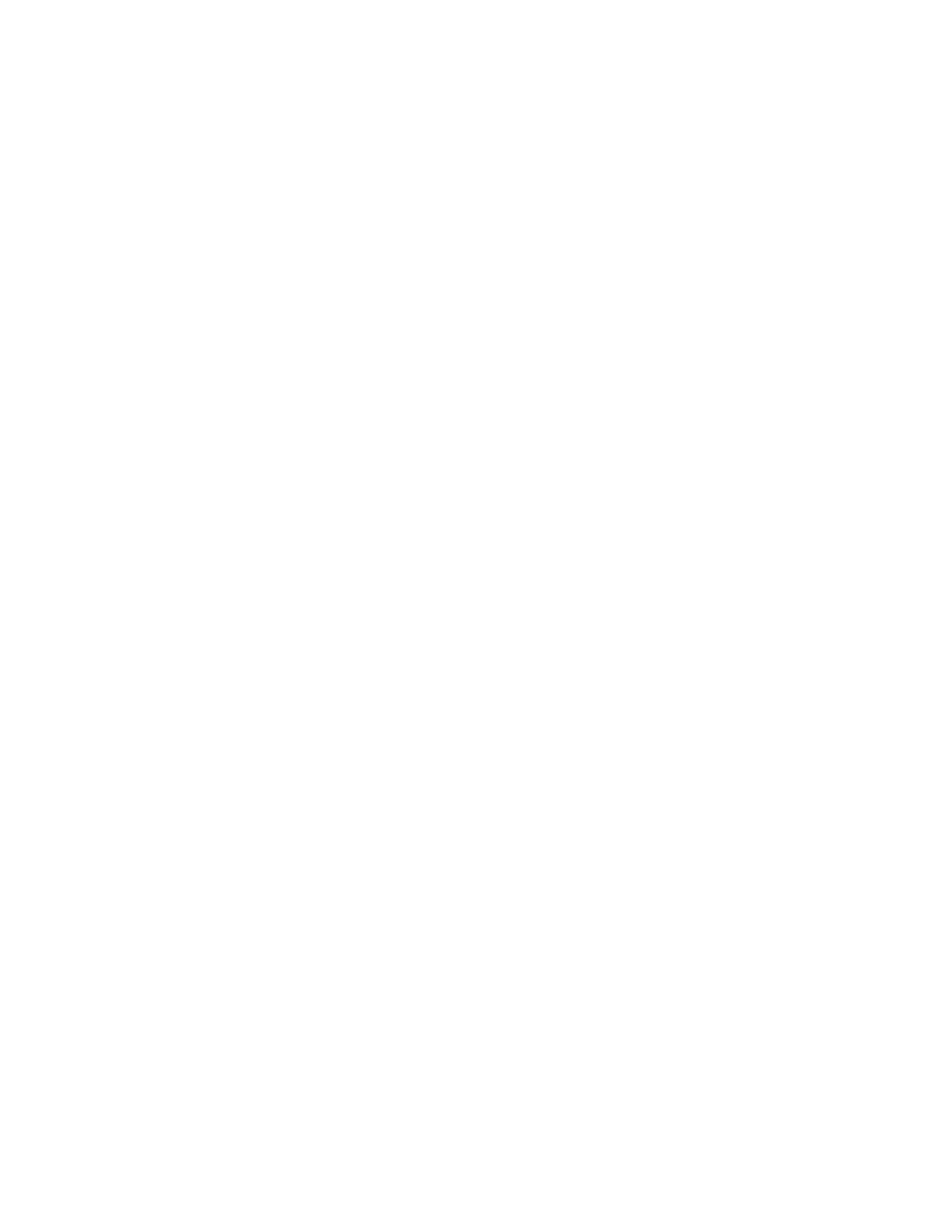}}
\addtocounter{subfigure}{-1}

\subfloat[T1 dataset]{\includegraphics[width=0.24\textwidth]{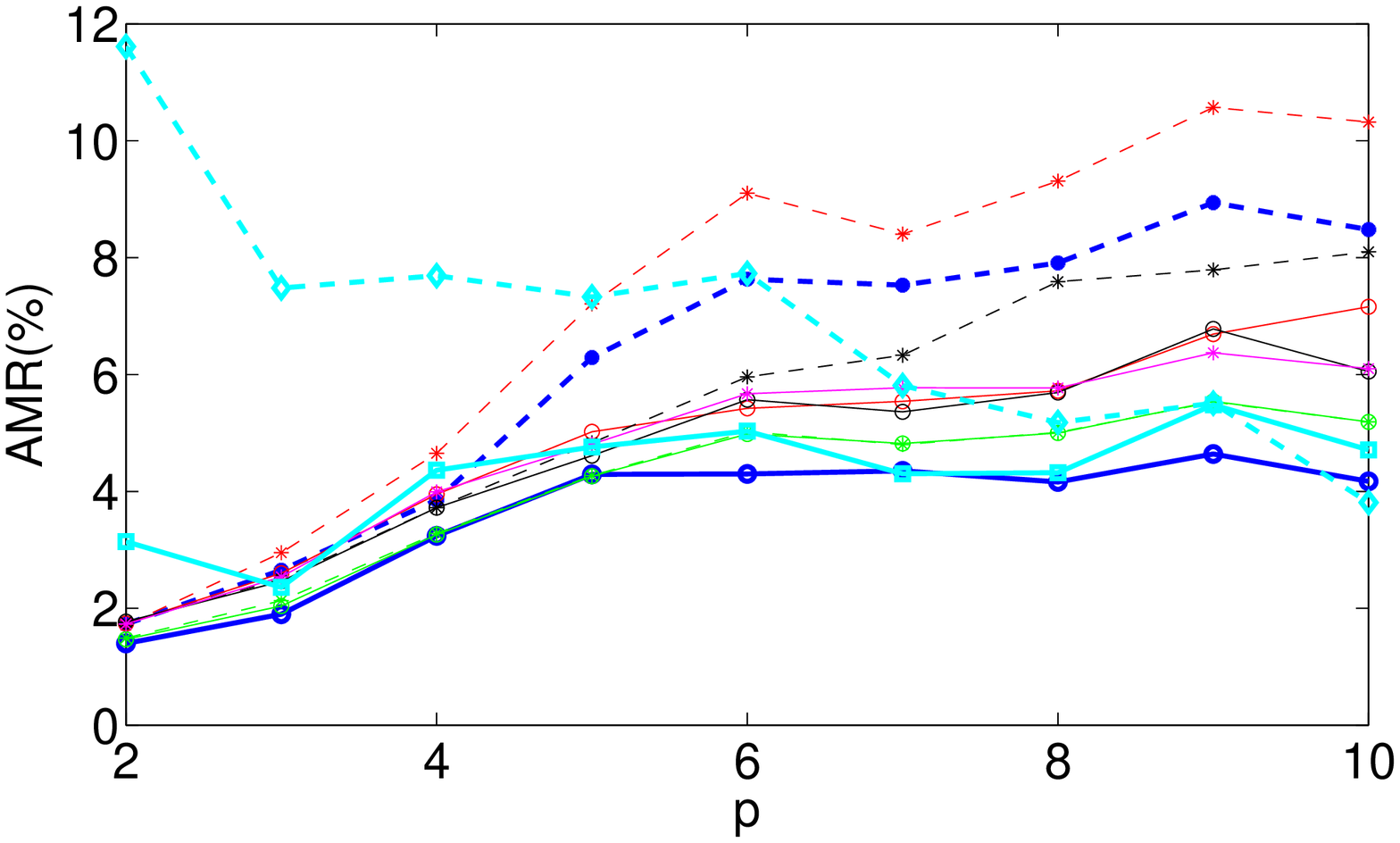}
\label{fig_p_T2}}
\subfloat[T2 dataset]{\includegraphics[width=0.24\textwidth]{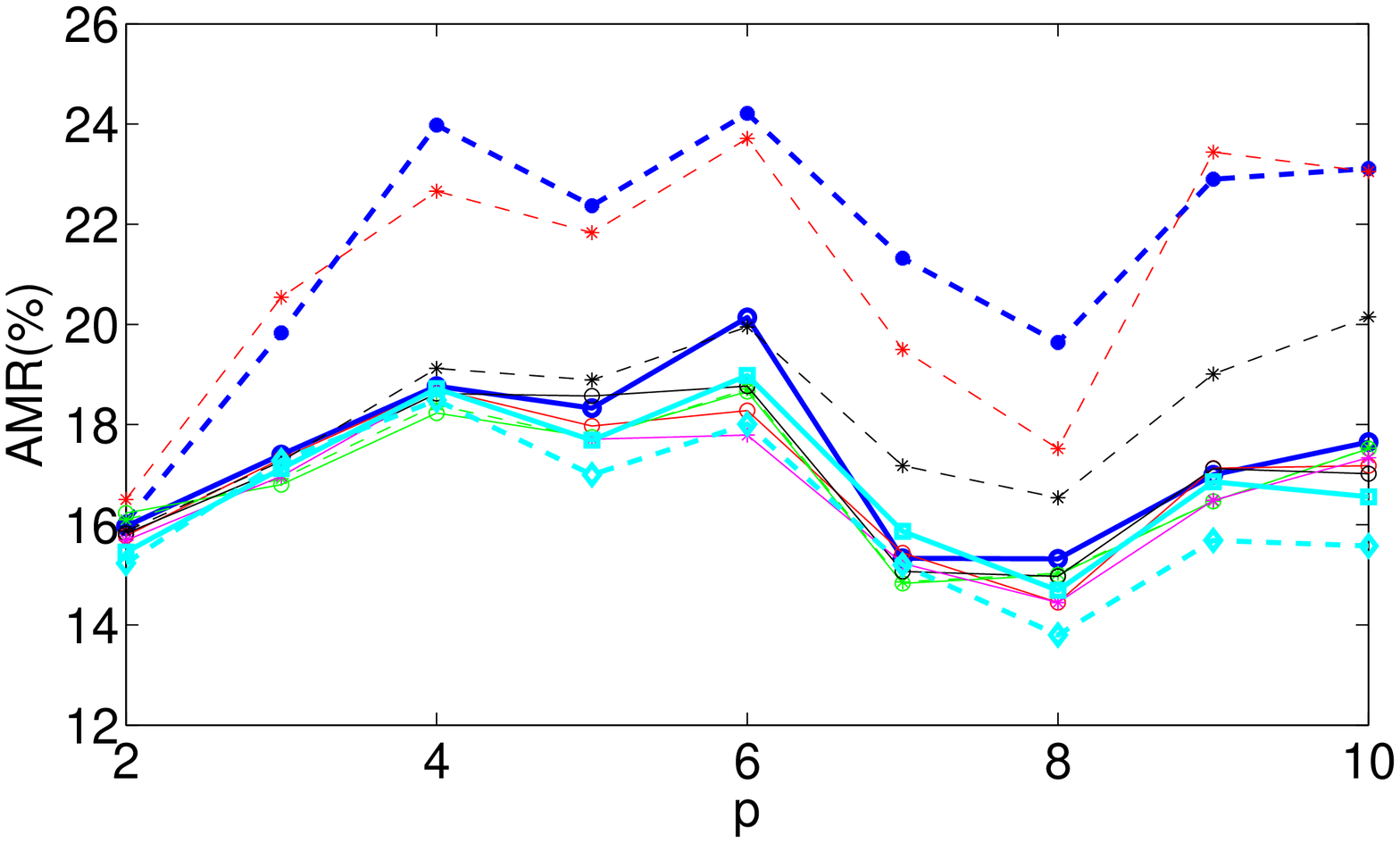}
\label{fig_p_T3}}
\subfloat[T3 dataset]{\includegraphics[width=0.24\textwidth]{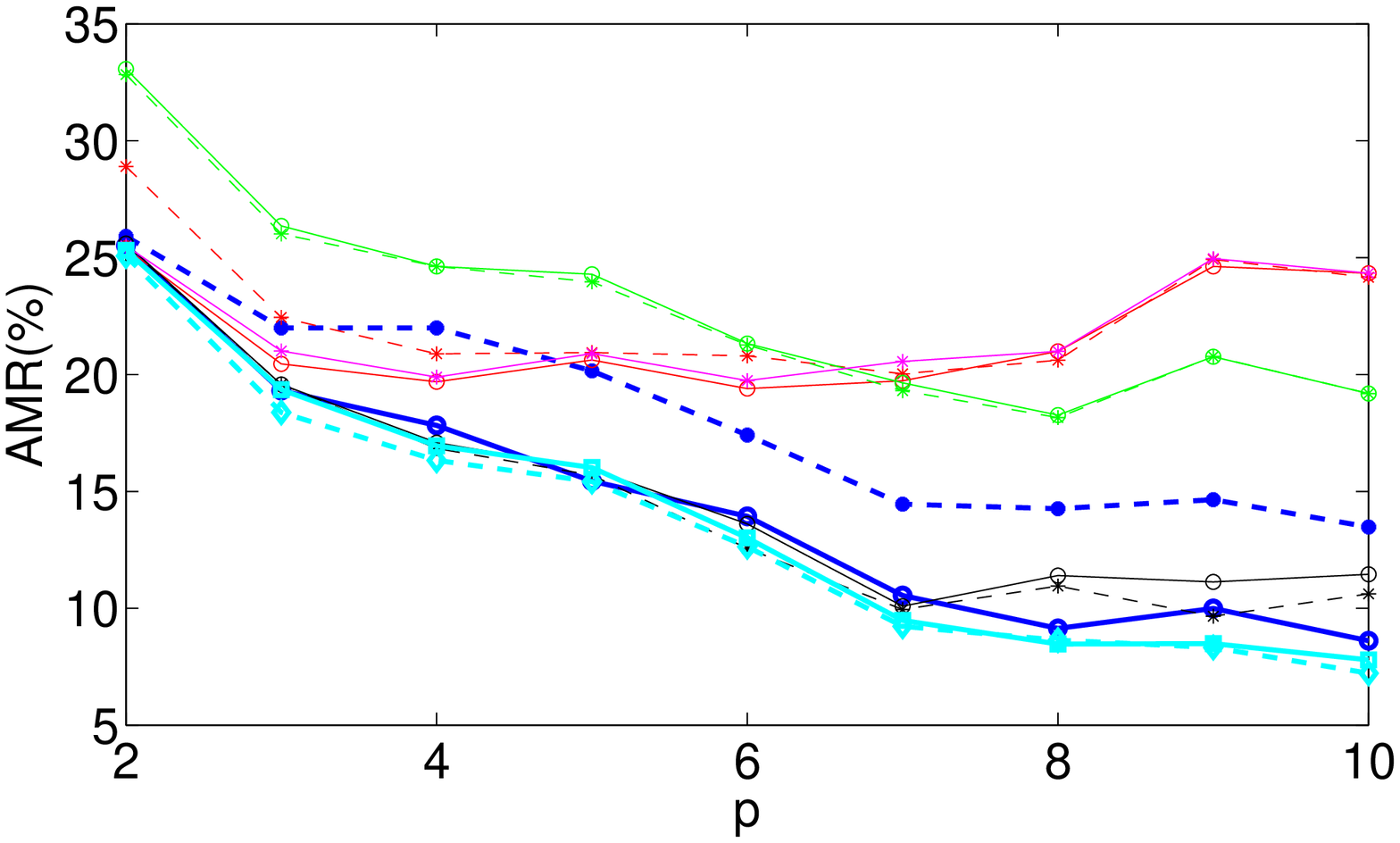}
\label{fig_p_T4}}
\subfloat[T4 dataset]{\includegraphics[width=0.24\textwidth]{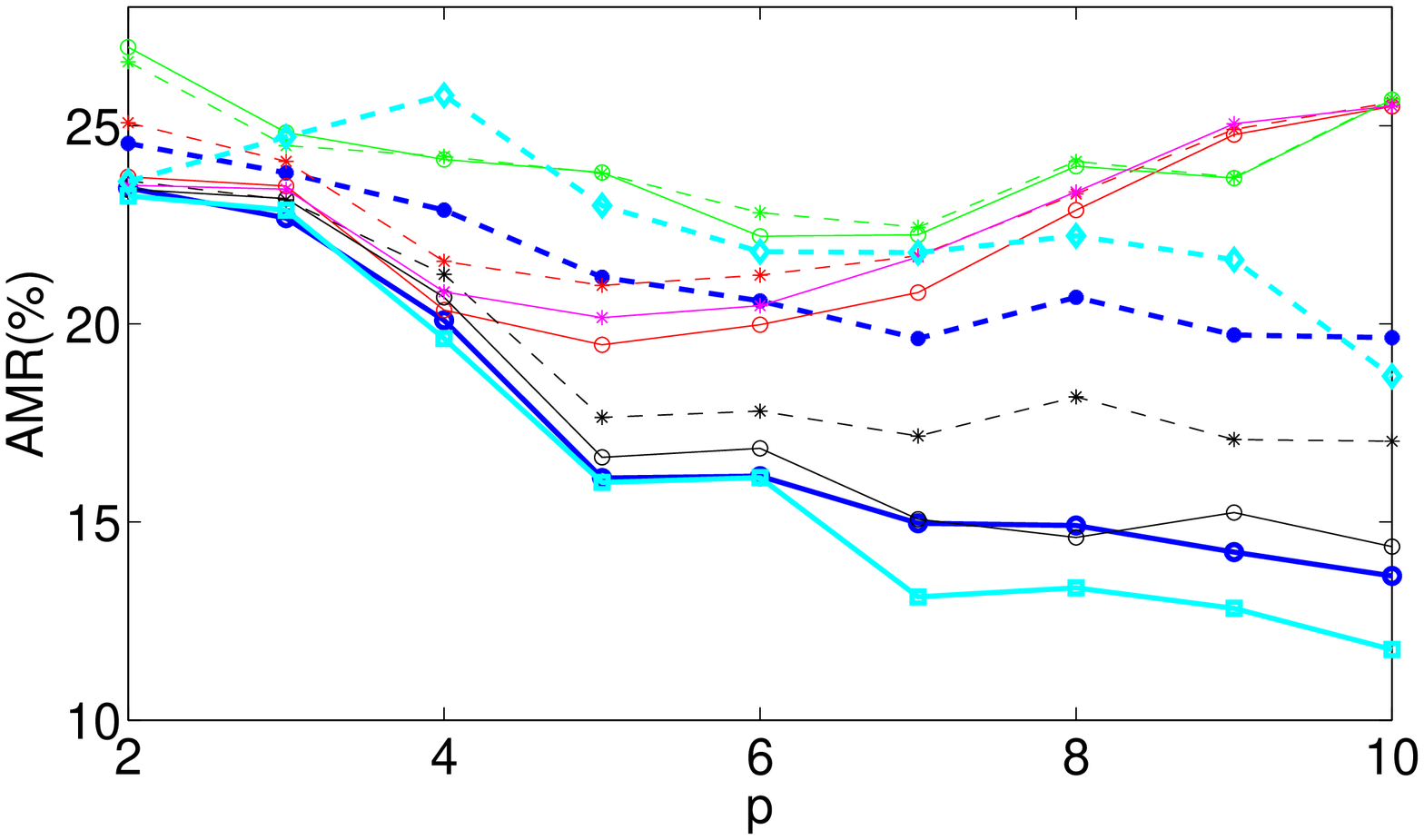}
\label{fig_p_T6}}


\caption{The classification performance varied with respect to the dimension for the classifiers on the four types of datasets.}
\label{fig_p}
\end{figure*}

For a real-world classification problem, the true distribution and the relationship between attributes is usually unknown, the influences of parameter \textit{kpc} on classification results are usually irregular, and the neighborhood size should be selected according to the testing results. Fig. \ref{fig_kpc_real} shows the performance curves with respect to \textit{kpc} of the  kNN-based methods on several real datasets. Because different real datasets usually have different distributions and different sizes, the curves for LD-kNN(GME) are usually different as depicted in the last row of Fig \ref{fig_kpc_real}. However, the performance curves on these datasets show that, on average the LD-kNN(GME) method can be quite effective for these real problems and that a somewhat larger neighborhood should be selected for LD-kNN(GME).

%
%
%
%
%
%

\subsection{Experiment II }
The purpose of experiment II is to investigate the scalability of dimension (or dimensional robustness) of the methods; i.e., how well a method performs while the dimension of datasets is increasing. In this experiment we implement the classification algorithms on each type of synthetic dataset with 500 samples for each class and with the dimension \textit{p} ranging from 2 to 10. The AMR of 10 trials of stratified 5-fold cross validation is obtained for each classifier. The \textit{kpc} for a kNN-based classifier is selected according to its performance in Experiment I. In addition, we have also employed SVM and naive Bayesian classifiers (NBC) as controls due to their acknowledged dimensional scalability.

Fig. \ref{fig_p} plots the classification performance as a function of the dimension \textit{p} for each of the classifiers on the four types of synthetic datasets. From Fig. \ref{fig_p}, we can see that the LD-kNN(GME) is generally more dimensionally robust than other kNN-based classifiers.

Fig. \ref{fig_p_T2} shows the uptrends of AMR for most of the classifiers with the dimension increasing on the corresponding datasets T1; it can be seen that the uptrend for LD-kNN(GME) is much slower than that for other kNN-based classifiers and SVM, which demonstrates the dimensional robustness of LD-kNN(GME). The NBC shows a downtrend of AMR with the dimension increasing; however it can not perform so effectively as LD-kNN(GME) due to the violation of the conditional independence assumption between attributes.
For the T2 dataset, Fig. \ref{fig_p_T3} demonstrates the similarity trends in terms of AMR among these classifiers with the dimension increasing. The LD-kNN(KDE) and DW2-kNN can be seen to perform not so well as the other classifiers.
From Fig. \ref{fig_p_T4}, LD-kNN(GME), SVM-kNN(RBF), SVM-kNN(Poly), SVM and NBC show similar effectiveness and robustness to the dimension on T3 datasets, and it can be seen that they perform more effective than other classifiers. In addition, the V-kNN, DW1-kNN and DW2-kNN are not so dimensional robustness.
For the T4 dataset, Fig. \ref{fig_p_T6} shows that the performance of LD-kNN(GME) is also as robust to the dimension as SVM.

From the above analysis it can be seen that the dimensional robustness of LD-kNN(GME) can be generally comparable to that of NBC and SVM on these datasets. On T1 and T4 datasets where the class conditional independence assumption is violated for NBC, the LD-kNN(GME) can be more effective than NBC and demonstrate comparable robustness with the SVM. Additionally, on all these datasets, the performances of LD-kNN(GME) are even more robust than or at least equal to SVM-kNN which used to be the state-of-the-art methods for classification.

\subsection{Experiment III }
Experiment III is designed to evaluated the efficiency of the LD-kNN methods when applied in datasets with different sizes. Because the time consumption for classification is related to the dataset size and the classifier more than the data distribution, synthetic datasets with different sizes can be applicable for efficiency tests. In this experiment, three factors, the neighborhood size for kNN-based classifiers, the size of the training set and the dimensionality of the dataset are investigated for the efficiency tests. Three experimental tasks corresponding to the three factors are implemented on a PC with one Pentium(R) 2.50GHz processor and one 2.00GB RAM. The computing software is Matlab R2010a in Windows XP OS.

\begin{figure*}[!t]
\centering
\subfloat[]{\includegraphics[width=0.33\textwidth]{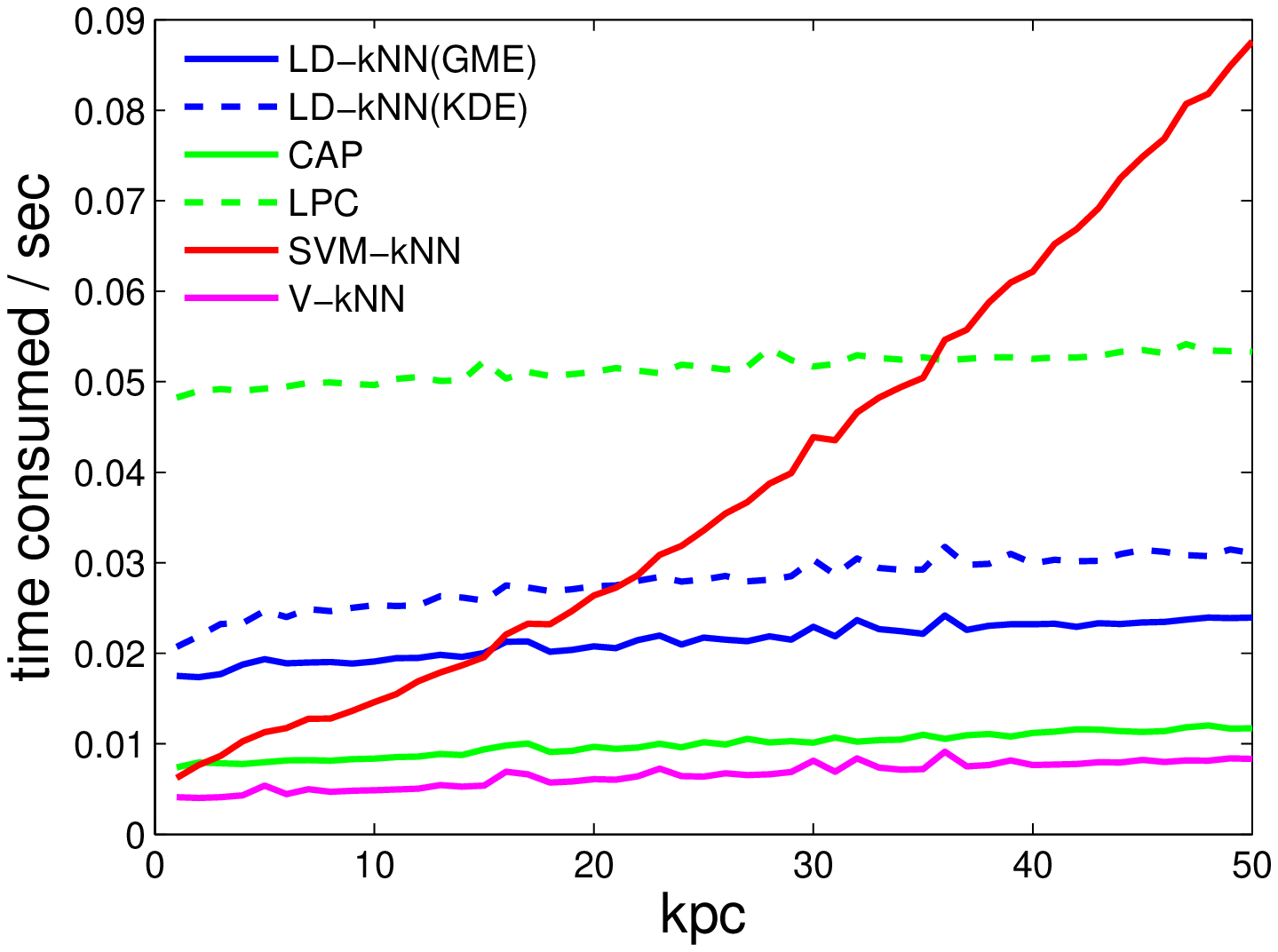}
\label{timek}}
\subfloat[]{\includegraphics[ width=0.33\textwidth]{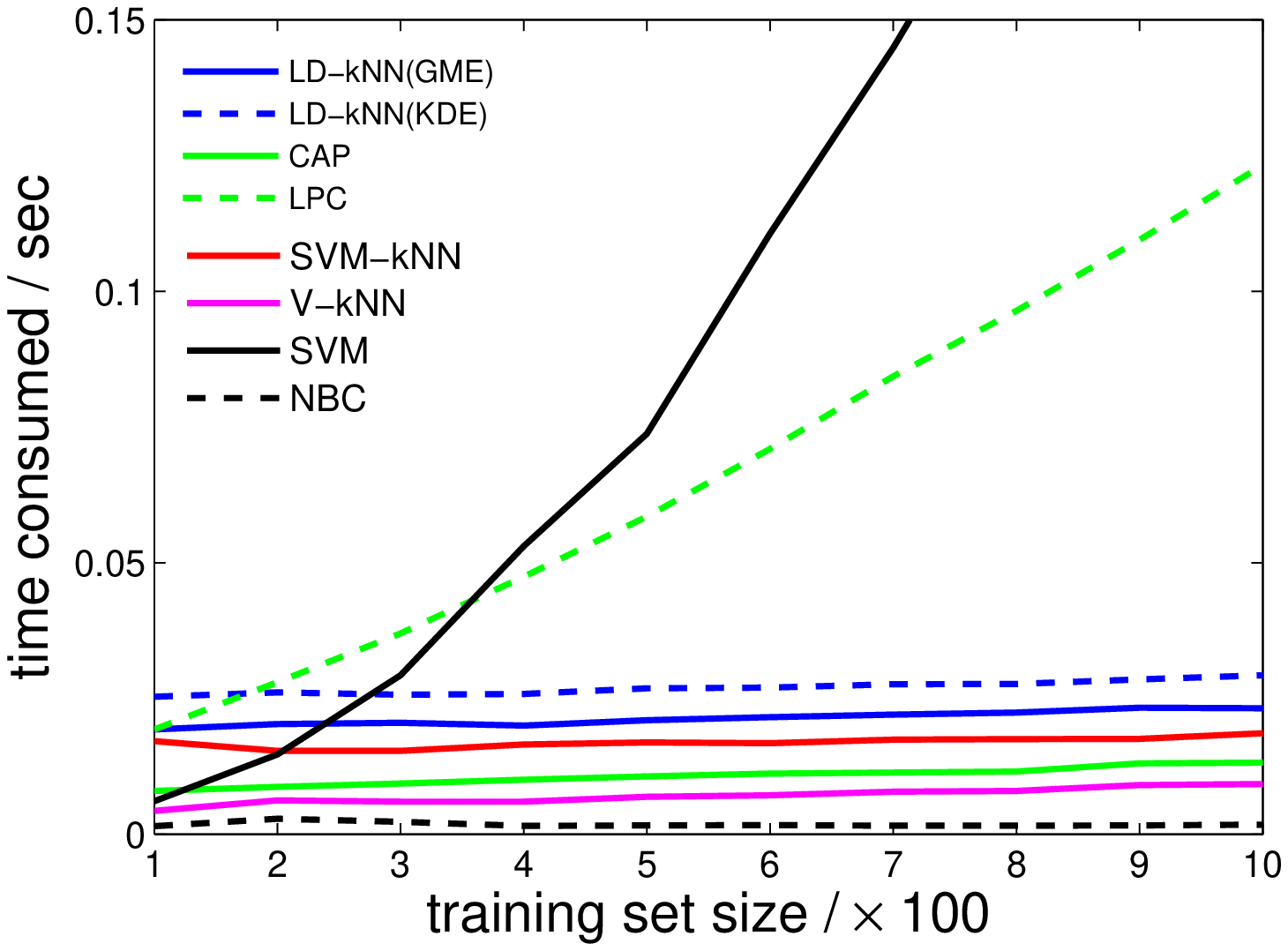}
\label{timen}}
\subfloat[]{\includegraphics[width=0.33\textwidth]{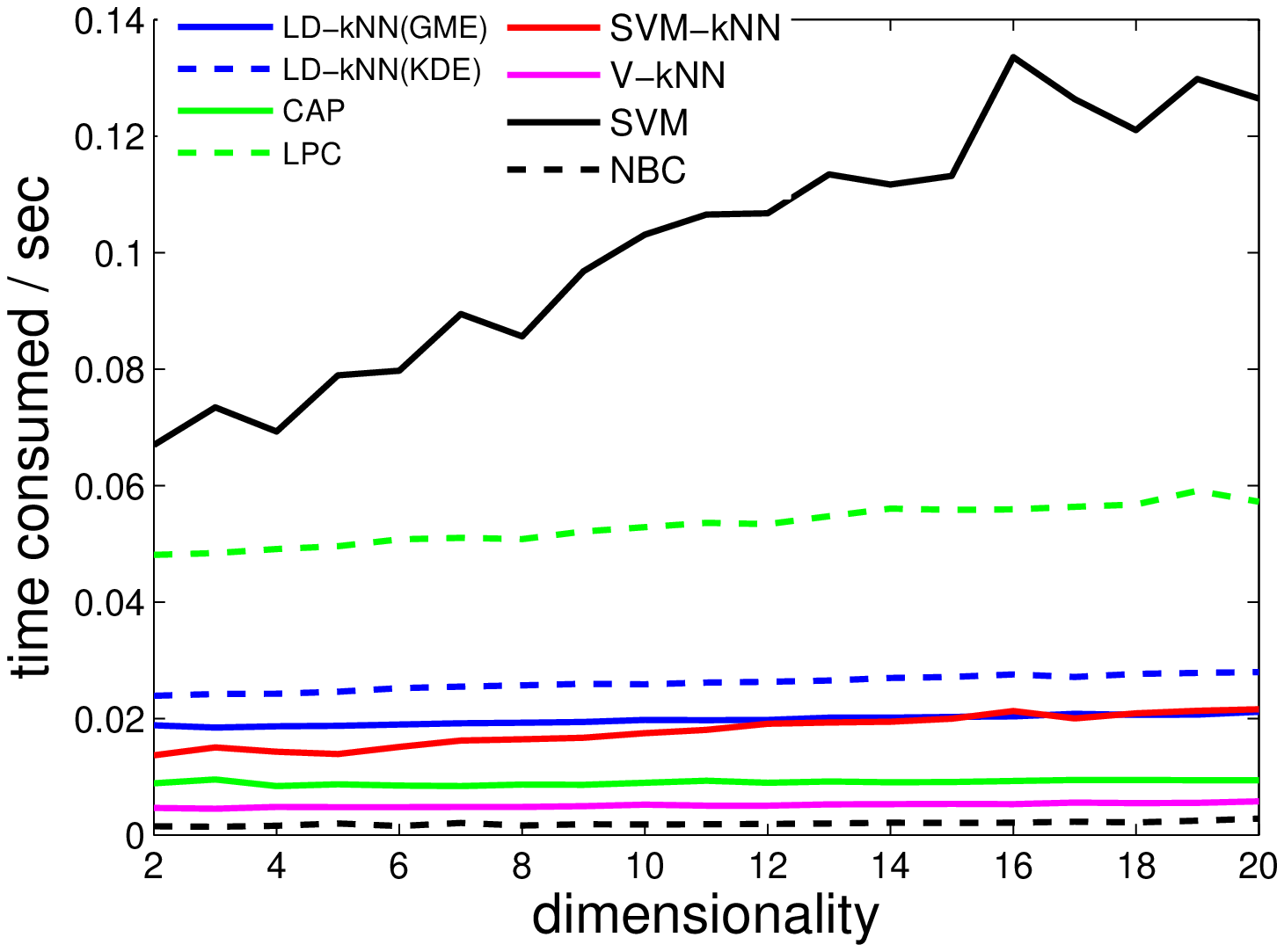}
\label{timed}}

\caption{The classification efficiency varied with respect to (a) neighborhood size, (b) training set size and (c) dataset dimensionality. }
\label{efficiency}
\end{figure*}

To investigate the influence of neighborhood size on the efficiency for kNN-based classifiers, we use a synthetic training set with 500 samples and a test set with 100 samples; both are 5-dimensional T2 datasets. The time consumptions for the kNN-based classifiers vary with the neighborhood size (i.e. $kpc$) as depicted in Fig. \ref{timek}, where we can see that with the neighborhood size increasing, the time consumptions of all these classifiers except SVM-kNN increase very slowly; while that of SVM-kNN increases more sharply, in agreement with the complexity analysis in Section \ref{complexity} that the SVM-kNN has a computational complexity $O(k^2)$ while other kNN-based classifiers is $O(k)$ if $m$ and $d$ are constant. From Fig. \ref{timek}, it can also be seen that with a constant neighborhood size, LD-kNN(GME) is less time-consuming than LD-kNN(KDE)and LPC, and more time-consuming than V-kNN, DW-kNN and CAP.

To investigate how efficiently the classifiers perform with the training set size increasing, we employ a training set with sample size varying from 100 to 1000 and the test set with 100 samples; both are also 5-dimensional T2 datasets; the neighborhood size is set $kpc=10$ for kNN-based classifiers. We have plot the variation curves (time consumption vs. training set size) in Fig. \ref{timen}. From Fig. \ref{timen}, with $kpc$ increasing, the time consumptions of all these classifiers except SVM and LPC increase so slightly that they remain visually stable. The time consumption of SVM increases most sharply and then LPC among these classifiers. This is because the computational complexity of SVM is between $O(m^2d)$ and $O(m^3+m^2d)$ as shown in \cite{bottou2007support} and LPC has an additional prior probability estimation process whose computational complexity is $O(k_0m^2)$ as described in \cite{li2008nearest}; while all other kNN-based classifiers just have a linear complexity with respect to training set size $m$. In addition, NBC and V-kNN are both quite efficient due to their simplicity. LD-kNN(GME) is a little more efficient than LD-kNN(KDE) but less efficient than NBC and other kNN-based classifiers.

As for the dataset dimensionality, we also use a training set and a test set respectively with 500 samples and 100 samples as before; $kpc=10$ is set for kNN-based classifiers; the only variable is the dimensionality that we vary from 2 to 20. The efficiency-dimensionality curves for different classifiers are shown in Fig. \ref{timed}. From Fig. \ref{timed}, we can see that the time consumptions for all classifiers show roughly linear increases with the dimensionality increasing; the increase of SVM is much sharper than that of other classifiers. In addition, LD-kNN(GME) and LD-kNN(KDE) are not so efficient as NBC, V-kNN and CAP; however, the increase rates among the five classifiers do not shown any significance difference.

All the results in Fig. \ref{efficiency} corroborate the computational complexity analysis in Section \ref{complexity}. From the results and analysis, the time consumption of LD-kNN does not increase much with the increase of neighborhood size, training set size and dataset dimensionality as is the case of NBC, V-kNN and CAP.

\subsection{Experiment IV }

The purpose of Experiment IV is to investigate the classification performance of the proposed method on the real datasets by comparing it with competing methods. In this experiment, we implement LD-kNN rules on the 27 UCI datasets for classification. The \textit{kpc} value for a kNN-based rule and a dataset is selected according to its performance in Experiment I.

Following experimental testing, the classification results in terms of AMR and AF1 on real datasets for all the classifiers are shown in Table \ref{tab:results} where the upper half is AMR and the lower half is AF1. From Table \ref{tab:results} we can see that LD-kNN(GME) performs best for 8 datasets in terms of both AMR and AF1, which is more than other classifiers. The overall average AMR and the corresponding average rank of LD-kNN(GME) on these datasets are 15.27\% and 3.41 respectively, lower than all other classifiers; similar results can be achieved when using AF1 as the indicator. We have compared LD-kNN(GME) with each of other classifiers in terms of AMR and AF1, the number of wins, ties and losses are displayed in the last row of the corresponding sub-table in Table \ref{tab:results}; the LD-kNN(GME) has a significant advantage in each pairwise comparisons. These results imply that the proposed LD-kNN(GME) may be a promising method for classification.

\begin{table*}
  \centering
  \caption{The AMR (\%) and AF1 for all the classifiers on the 27 UCI datasets and the corresponding statistical results. The best performance for each dataset is described in bold-face.}
    \begin{tabular}{l||p{25pt}p{25pt}p{25pt}p{25pt}p{25pt}p{25pt}p{25pt}p{25pt}p{25pt}||p{25pt}p{25pt}p{25pt}}
    \toprule
    Datasets & LD-(GME) & LD-(KDE) & V-kNN & DW1-kNN & DW2-kNN & CAP   & LPC   & SVM-(RBF) & SVM-(Poly) & RBF-SVM & GME-NBC & KDE-NBC \\
    \midrule
           & \multicolumn{12}{c}{AMR} \\
    \midrule
    Blood & 20.67 & 22.01 & 20.64 & 20.43 & 24.33 & 20.56 & 20.37 & \textbf{20.25} & 20.47 & 21.58 & 23.44 & 22.45 \\
    Bupaliver & 31.62 & 33.07 & 34.17 & 33.51 & 36.41 & 32.52 & 32.70 & 32.72 & 35.13 & \textbf{30.17} & 44.35 & 46.38 \\
    Cardio1 & 19.83 & 22.74 & 25.15 & 22.52 & 22.20 & 20.86 & 20.71 & 20.14 & 24.14 & \textbf{18.42} & 27.01 & 31.62 \\
    Cardio2 & \textbf{8.23} & 8.52  & 9.24  & 8.70  & 8.76  & 8.25  & 8.25  & 9.11  & 9.04  & 8.83  & 17.65 & 21.93 \\
    Climate & 7.30  & 6.93  & 7.41  & 7.61  & 7.63  & 7.63  & 7.63  & 7.54  & 7.50  & \textbf{5.35} & 5.39  & 7.44 \\
    Dermatology & \textbf{2.10} & 3.91  & 4.21  & 4.18  & 4.18  & 3.11  & 3.11  & 2.38  & 4.34  & 2.73  & 6.37  & 15.22 \\
    Glass. & 27.34 & \textbf{26.64} & 32.62 & 29.39 & 29.30 & 29.77 & 29.44 & 30.23 & 31.64 & 29.49 & 51.40 & 52.43 \\
    Haberman & 26.44 & 26.99 & \textbf{24.51} & 25.26 & 29.90 & 26.70 & 25.36 & 25.88 & 26.37 & 26.99 & 25.20 & 26.24 \\
    Heart & 16.81 & 19.04 & \textbf{15.07} & 15.74 & 19.11 & 16.85 & 16.85 & 17.04 & 17.11 & 17.04 & 15.56 & 19.74 \\
    ILPD  & 30.29 & 29.97 & \textbf{28.01} & 28.70 & 31.30 & 30.89 & 30.65 & 29.35 & 28.95 & 29.02 & 44.10 & 49.55 \\
    Image & 4.58  & 3.94  & 6.10  & 4.03  & 3.70  & \textbf{3.63} & \textbf{3.63} & 5.90  & 5.80  & 5.87  & 20.39 & 23.33 \\
    Iris  & \textbf{3.80} & 4.07  & 4.00  & 4.60  & 5.47  & 4.07  & 3.87  & 3.87  & 4.60  & 4.40  & 4.60  & 4.07 \\
    Leaf  & 28.50 & 26.68 & 50.41 & 35.56 & 28.35 & \textbf{25.74} & 26.12 & 33.18 & 42.91 & 32.03 & 30.26 & 26.74 \\
    Pageblock & 3.34  & 3.28  & 3.41  & 3.11  & \textbf{3.07} & 3.15  & 3.20  & 3.39  & 3.32  & 3.62  & 9.32  & 9.24 \\
    Parkinsons & 5.85  & 5.85  & 5.85  & \textbf{5.33} & 5.49  & 5.85  & 5.85  & 5.85  & 5.90  & 12.00 & 29.95 & 40.51 \\
    Seeds & 6.71  & 6.86  & 7.00  & 6.33  & \textbf{6.00} & 6.48  & 6.48  & 6.62  & 6.81  & 7.29  & 9.62  & 7.57 \\
    Sonar & 11.68 & \textbf{10.43} & 14.47 & 13.51 & 14.04 & 12.45 & 12.45 & 12.98 & 17.69 & 15.19 & 31.11 & 17.45 \\
    Spambase & 7.34  & 8.01  & 8.62  & 7.65  & 7.71  & 8.07  & 8.11  & 8.44  & 8.91  & \textbf{6.69} & 18.62 & 22.62 \\
    Spectf & 19.74 & 21.80 & 19.48 & 20.75 & 26.74 & 20.30 & 20.30 & 20.19 & \textbf{19.25} & 20.45 & 32.58 & 38.58 \\
    Vehicle & 23.85 & 27.06 & 28.74 & 28.17 & 29.18 & 24.17 & 24.22 & 25.04 & 27.84 & \textbf{23.71} & 54.55 & 42.88 \\
    Vertebral1 & 16.42 & 17.32 & 16.68 & 16.61 & 17.32 & 15.35 & 15.58 & 15.90 & 16.97 & \textbf{15.06} & 22.32 & 23.13 \\
    Vertebral2 & \textbf{15.00} & 16.55 & 19.35 & 19.35 & 18.65 & 18.48 & 18.19 & 17.13 & 19.97 & 16.55 & 17.65 & 16.90 \\
    WBC   & 2.91  & 3.59  & 3.00  & 3.13  & 3.21  & \textbf{2.52} & \textbf{2.52} & 2.96  & 3.07  & 3.13  & 3.79  & 4.08 \\
    WDBC  & \textbf{2.62} & 3.01  & 3.30  & 2.99  & 3.34  & 2.72  & 2.72  & 3.16  & 3.46  & \textbf{2.62} & 6.66  & 9.81 \\
    Wine  & \textbf{0.89} & 1.29  & 2.48  & 2.20  & 4.16  & 1.97  & 1.97  & 1.57  & 1.19  & 1.86  & 3.31  & 3.65 \\
    WinequalityR & \textbf{34.00} & 34.57 & 40.28 & 34.94 & 34.52 & 36.53 & 36.53 & 38.28 & 39.19 & 37.64 & 44.85 & 37.85 \\
    WinequalityW & \textbf{34.45} & 35.42 & 43.64 & 35.41 & 35.28 & 36.14 & 36.14 & 42.39 & 41.50 & 42.57 & 55.62 & 48.04 \\
    \midrule
    Average AMR & \textbf{15.27} & 15.91 & 17.70 & 16.29 & 17.01 & 15.73 & 15.67 & 16.35 & 17.52 & 16.31 & 24.28 & 24.80 \\
    Average rank & \textbf{3.41 } & 5.70  & 7.37  & 5.54  & 7.26  & 4.91  & 4.52  & 5.76  & 7.74  & 5.72  & 9.81  & 10.26  \\
    win/tie/loss &       & 19/1/7 & 21/1/5 & 19/0/8 & 22/0/5 & 19/1/7 & 18/1/8 & 21/1/5 & 22/0/5 & 19/1/7 & 24/0/3 & 25/0/2 \\
    \toprule
       & \multicolumn{12}{c}{AF1} \\
    \midrule
    Blood & 0.6610 & 0.6483 & 0.6529 & 0.6535 & 0.6127 & 0.6695 & \textbf{0.6756} & 0.6594 & 0.6594 & 0.5947 & 0.5564 & 0.5217 \\
    Bupaliver & 0.6691 & 0.6598 & 0.6282 & 0.6323 & 0.6205 & 0.6291 & 0.6321 & 0.6395 & 0.6277 & \textbf{0.6798} & 0.5543 & 0.5282 \\
    Cardio1 & 0.7436 & 0.7240 & 0.6597 & 0.7210 & 0.7288 & 0.7435 & 0.7394 & 0.7205 & 0.6755 & \textbf{0.7638} & 0.6793 & 0.6589 \\
    Cardio2 & 0.8422 & 0.8355 & 0.8225 & 0.8338 & 0.8327 & 0.8469 & \textbf{0.8478} & 0.8239 & 0.8248 & 0.8328 & 0.7086 & 0.6725 \\
    Climate & 0.6399 & 0.7001 & 0.6458 & 0.6502 & 0.6419 & 0.6350 & 0.6350 & 0.6403 & 0.6262 & \textbf{0.8002} & 0.7665 & 0.6836 \\
    Dermatology & \textbf{0.9772} & 0.9574 & 0.9559 & 0.9562 & 0.9555 & 0.9663 & 0.9663 & 0.9734 & 0.9538 & 0.9702 & 0.9268 & 0.8145 \\
    Glass. & 0.6537 & 0.6672 & 0.5889 & 0.6679 & \textbf{0.6730} & 0.6658 & 0.6656 & 0.6139 & 0.6041 & 0.6204 & 0.4748 & 0.4830 \\
    Haberman & 0.5711 & 0.5520 & 0.5871 & 0.5800 & 0.5559 & 0.6230 & \textbf{0.6478} & 0.5748 & 0.5662 & 0.5009 & 0.5749 & 0.5301 \\
    Heart & 0.8299 & 0.8071 & \textbf{0.8446} & 0.8383 & 0.8060 & 0.8288 & 0.8288 & 0.8269 & 0.8257 & 0.8266 & 0.8419 & 0.7990 \\
    ILPD  & \textbf{0.6471} & 0.6410 & 0.6053 & 0.6053 & 0.6085 & 0.6231 & 0.6230 & 0.6053 & 0.6091 & 0.4311 & 0.5589 & 0.5031 \\
    Image & 0.9540 & 0.9605 & 0.9383 & 0.9594 & 0.9629 & \textbf{0.9636} & \textbf{0.9636} & 0.9403 & 0.9414 & 0.9411 & 0.7781 & 0.7462 \\
    Iris  & \textbf{0.9620} & 0.9593 & 0.9600 & 0.9540 & 0.9454 & 0.9593 & 0.9613 & 0.9614 & 0.9540 & 0.9561 & 0.9540 & 0.9593 \\
    Leaf  & 0.7158 & 0.7323 & 0.4507 & 0.6261 & 0.7124 & \textbf{0.7434} & 0.7381 & 0.6551 & 0.5598 & 0.6652 & 0.6995 & 0.7277 \\
    Pageblock & 0.8289 & 0.8292 & 0.8294 & 0.8406 & \textbf{0.8450} & 0.8412 & 0.8385 & 0.8325 & 0.8364 & 0.7714 & 0.5931 & 0.5927 \\
    Parkinsons & 0.9233 & 0.9233 & 0.9233 & 0.9302 & \textbf{0.9275} & 0.9233 & 0.9233 & 0.9233 & 0.9212 & 0.8147 & 0.6807 & 0.5886 \\
    Seeds & 0.9326 & 0.9306 & 0.9296 & 0.9364 & \textbf{0.9398} & 0.9352 & 0.9352 & 0.9336 & 0.9319 & 0.9269 & 0.9036 & 0.9230 \\
    Sonar & 0.8814 & \textbf{0.8940} & 0.8534 & 0.8631 & 0.8571 & 0.8742 & 0.8742 & 0.8686 & 0.8188 & 0.8467 & 0.6875 & 0.8240 \\
    Spambase & 0.9228 & 0.9160 & 0.9094 & 0.9196 & 0.9190 & 0.9154 & 0.9152 & 0.9104 & 0.9058 & \textbf{0.9295} & 0.8129 & 0.7738 \\
    Spectf & \textbf{0.6453} & 0.6360 & 0.6066 & 0.6019 & 0.6012 & 0.6075 & 0.6075 & 0.6116 & 0.6223 & 0.6002 & 0.6406 & 0.5940 \\
    Vehicle & 0.7570 & 0.7251 & 0.7077 & 0.7120 & 0.7028 & 0.7582 & 0.7578 & 0.7447 & 0.7086 & \textbf{0.7621} & 0.4263 & 0.5562 \\
    Vertebral1 & 0.8069 & 0.8016 & 0.8102 & 0.8107 & 0.8065 & 0.8222 & 0.8170 & 0.8165 & 0.8061 & \textbf{0.8256} & 0.7653 & 0.7638 \\
    Vertebral2 & \textbf{0.8082} & 0.7902 & 0.7723 & 0.7718 & 0.7788 & 0.7736 & 0.7786 & 0.7904 & 0.7628 & 0.7931 & 0.7813 & 0.7970 \\
    WBC   & 0.9681 & 0.9604 & 0.9670 & 0.9655 & 0.9647 & \textbf{0.9726} & \textbf{0.9726} & 0.9676 & 0.9661 & 0.9658 & 0.9589 & 0.9551 \\
    WDBC  & 0.9719 & 0.9675 & 0.9642 & 0.9677 & 0.9639 & 0.9707 & 0.9707 & 0.9657 & 0.9624 & \textbf{0.9720} & 0.9284 & 0.8881 \\
    Wine  & \textbf{0.9912} & 0.9882 & 0.9740 & 0.9783 & 0.9633 & 0.9821 & 0.9821 & 0.9886 & 0.9897 & 0.9837 & 0.9714 & 0.9730 \\
    WinequalityR & \textbf{0.3767} & 0.3627 & 0.2959 & 0.3601 & 0.3655 & 0.3620 & 0.3620 & 0.2948 & 0.3176 & 0.2916 & 0.3309 & 0.3014 \\
    WinequalityW & \textbf{0.4089} & 0.4077 & 0.2985 & 0.3954 & 0.4021 & 0.3961 & 0.3961 & 0.3000 & 0.3263 & 0.2641 & 0.2670 & 0.2726 \\
    \midrule
    Average AF1 & \textbf{0.7811} & 0.7769 & 0.7475 & 0.7678 & 0.7664 & 0.7789 & 0.7798 & 0.7623 & 0.7520 & 0.7530 & 0.6971 & 0.6826 \\
    Average rank & \textbf{3.54 } & 5.35  & 7.80  & 5.81  & 6.67  & 4.17  & 4.20  & 6.11  & 8.17  & 6.59  & 9.30  & 10.30  \\
    win/tie/loss &       & 20/1/6 & 21/1/5 & 18/0/9 & 21/0/6 & 15/1/11 & 15/1/11 & 21/1/5 & 26/0/1 & 20/0/7 & 24/0/3 & 25/0/2 \\
    \bottomrule

    \end{tabular}%
    \flushleft
    1. The kNN-based classifiers are on the left and the non-kNN classifiers are on the right. \\
    2. GME-NBC and KDE-NBC are the naive Bayesian classifiers that estimates the probability density with GME and KDE respectively. \\
  \label{tab:results}%
\end{table*}%

We have employed a Friedman test \cite{hollander1999nonparametric,Dem2006Statistical} for multiple comparison among these classifiers. The computed Friedman statistics for AMR and AF1 are respectively 97.07 and 100.03, both higher than the critical value $\chi^2_{0.05,11} = 19.68$ at the 0.05 significance, which indicates significant difference among these 12 classifiers in terms of both AMR and AF1.

We further use the post-hoc Bonferroni-Dunn test \cite{Dem2006Statistical} to reveal the differences among the classifiers. Fig. \ref{multitest} shows the results of the Bonferroni-Dunn test that the other classifiers are compared to LD-kNN(GME). Fig. \ref{multitest_AMR} indicates that, in terms of AMR, LD-kNN(GME) performs significantly ($p<0.05$) better than KDE-NBC, GME-NBC, SVM-kNN(Poly), V-kNN and DW2-kNN and the data is not sufficient to indicate the differences between LD-kNN(GME) and the other 6 classifiers. Also, Fig. \ref{multitest_AF1} indicates that, in terms of AF1, LD-kNN(GME) can significantly ($p<0.05$) outperform KDE-NBC, GME-NBC, SVM-kNN(Poly), V-kNN, DW2-kNN, SVM and SVM-RBF and the differences between LD-kNN(GME) and the other 4 classifiers can not be detected from the data.

\begin{figure}[!t]
\centering
\subfloat[AMR]{\includegraphics[width=\columnwidth]{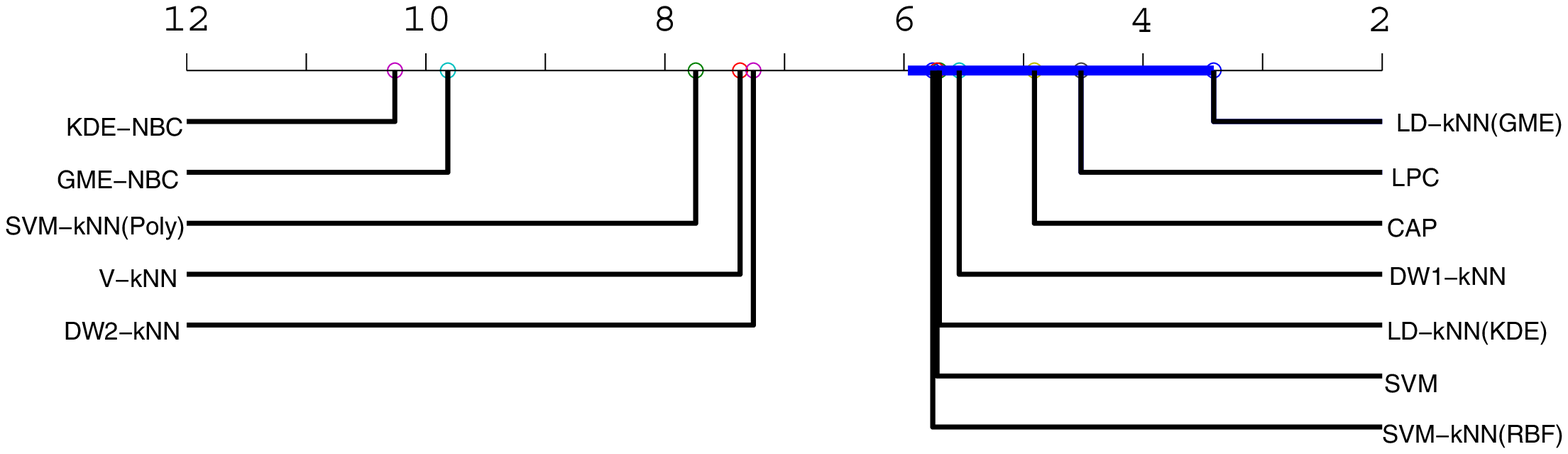}
\label{multitest_AMR}}

\subfloat[AF1]{\includegraphics[width=\columnwidth]{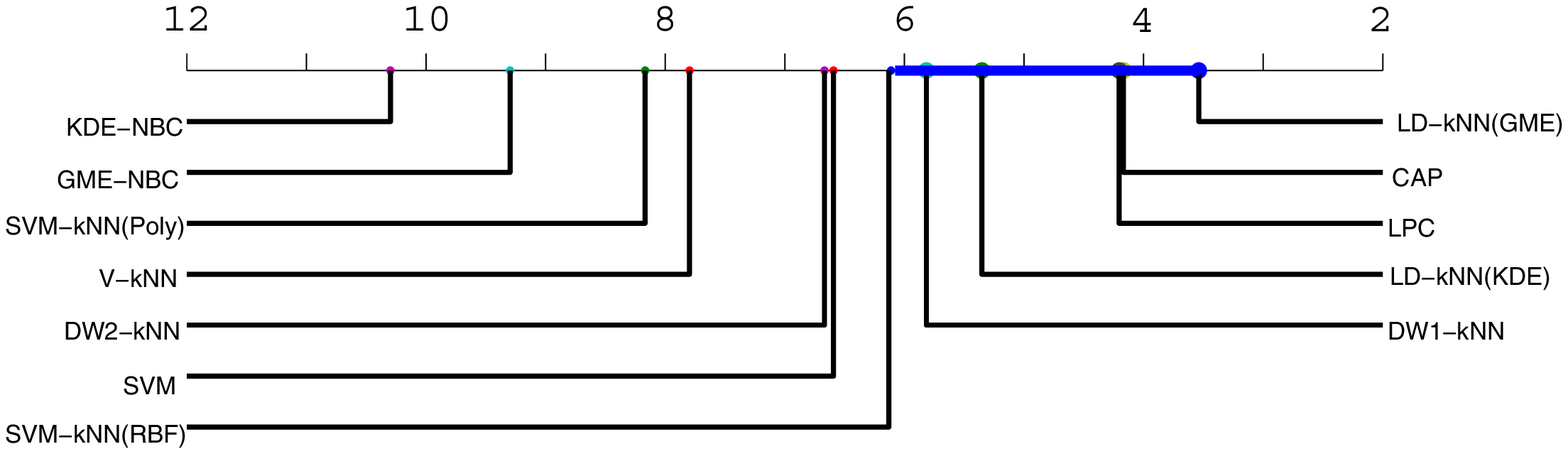}
\label{multitest_AF1}}

\caption{Comparisons of LD-kNN(GME) against the other classifiers with the Bonferroni-Dunn test in terms of (a) AMR and (b) AF1. The LD-kNN(GME) can significantly ($p<0.05$) outperform the classifiers with ranks outside the marked interval.}
\label{multitest}
\end{figure}

Additionally, to evaluate how well a particular method performs on average among all the problems taken into consideration, it is the issue of robustness to classification problems. Following the method designed by Friedman \cite{friedman1994flexible,li2008nearest}, we quantify the robustness of a classifier \textit{m} by the ratio $r_m$ of its error rate $e_m$ to the smallest error rate over all the methods being compared in a particular application, i.e., $r_m=e_m/{\min_{1 \leq {k} \leq 12}{e_k}}$. A greater value for this ratio indicates an inferior performance for the corresponding method for that application among the comparative methods. Thus, the distribution of $r_m$ for each method over all the datasets provides information concerning its robustness to classification problems. We illustrate the distribution of $r_m$ for each method over the 27 datasets by box plots in Fig. \ref{robustness} where it is clear that the spread of $r_m$ for LD-kNN(GME) is narrow and close to point 1.0; this demonstrates that the LD-kNN(GME) method performs robustly to these classification problems. From Fig. \ref{robustness} we can see that, as state-of-the-art classifiers CAP and LPC are also as robust to these classification problems as LD-kNN(GME). LD-kNN(KDE) is less robust than LD-kNN(GME), CAP and LPC, as it is at the medium level in terms of the robustness. NBC performs least well in practical problems; this is due to the fact that the class conditional independence assumption is usually violated in practical problems.

\begin{figure}[!t]
\centering
\includegraphics[width=\columnwidth]{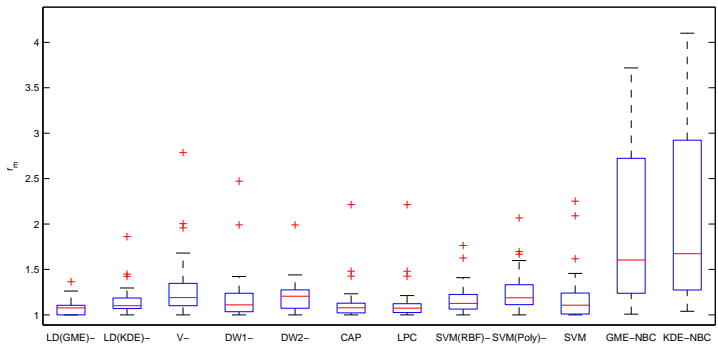}
\caption{The $r_m$ distributions of different classifiers.}
\label{robustness}
\end{figure}

In considering the reported results, it can be concluded that LD-kNN(GME) performs better than most of other classifiers in terms of effectiveness (described by AMR or AF1) and robustness. In considering the kNN-based classifiers, the DW-kNN improves the traditional V-kNN by weighting and achieves a better performance over these datasets. For the LD-kNN(KDE), by estimating the local distribution based on distances, it is essentially a type of DW-kNN method and we have observed similar performances for the LD-kNN(KDE), DW1-kNN and DW2-kNN in terms of robustness as shown in Fig. \ref{robustness}. The CAP and the LPC improve the kNN method by using local center and can be viewed as simplified versions of LD-kNN(GME); and they can achieve a comparable effectiveness and robustness with LD-kNN(GME). In Combining SVM and kNN methods, the SVM-kNN achieves the dimensional scalability; however, it fails to perform as effectively and robustly as the LD-kNN(GME). The LD-kNN(GME) is a more comprehensive method, as it considers the kNNs integrally by local distribution and generates the classification decision rule by maximizing posterior probability; thus it is reasonable to conclude that among the kNN-based classifiers the LD-kNN(GME) should have an upper level performance.

\section{Discussion}\label{sec:discussion}
As a kNN-based classifier, LD-kNN can inherit the advantages of kNN method and moreover, due to the classification
method being based on maximum posterior probability,
the LD-kNN classifier can achieve the Bayes error rate in
theory. However, in practice the effectiveness of LD-kNN
usually depends on the accuracy of LPD estimation. The
LPD is usually estimated through a local model assumption.
In this article, we have introduced two approaches GME
and KDE for LDP estimation from the samples in the
neighborhood. With the intuition that the distribution in a
local region should be simpler than in the whole sample
space, a moderately complex model should fit the local
distribution better. Thus, due to the complexity of KDE and
the simplicity of uniform assumption (as is the case in the
V-kNN), a local model with medium complexity such as LD-kNN(GME) should be more reasonable, and moreover, it has
demonstrated its effectiveness, efficiency and robustness in
the experimental results compared with LD-kNN(KDE) and
kNN.

In fact, the selection of local distribution model is quite
dependent on the size of neighborhood. If the neighborhood
is small, a complex local model created from a small number
of samples may not so stable; a local uniform model
should be more adapted. On the other hand, samples in a
large neighborhood are not always distributed uniformly; a
complex local model is favored. For a sparse or complexly
distributed dataset, a large neighborhood size and a relatively
complex local model should be selected for the LPD
estimation. From the results of Experiment I in Fig. \ref{fig_kpc2} and \ref{fig_kpc_real},
we can see that in a larger neighborhood LD-kNN(GME) is
usually superior to V-kNN, while in a small neighborhood
the superiority is not so significantly. However, the exact
relationship between neighborhood size and the local distribution
model is not available, how to select a neighborhood
size and the corresponding local distribution model remains
a part of our future research.

The LD-kNN is essentially a Bayesian classification
method as it is predicated on the Bayes theorem and predicts
the class membership probabilities. LD-kNN estimates
the posterior probability through the local distribution described
by local probability density around the query sample
rather than through the global distribution as NBC. The
NBC, as a Bayesian classier can also achieve the minimum
error rate in theory. However, the global distribution model
is usually too complex to make a model assumption; thus
the class conditional independence assumption should be
made to reduce the complexity of the global distribution.
Thus, along with the fact that the class conditional independence
assumption usually does not hold in practical
problems, it is not surprising that LD-kNN performs much
more effectively than NBC in Experiment IV.

\section{Conclusion}\label{sec:conclusion}
We have introduced the concept of local distribution and
have considered kNN classification methods based on local
distribution. Subsequently, we have presented our novel
LD-kNN technique developed to enable pattern classification;
the LD-kNN method extends the traditional kNN
method in that it employs local distribution. The LD-kNN
method essentially considers the k nearest neighbors of the
query sample as several integral sets by the class labels
and then organizes the local distribution information in
the area represented by these integral sets to achieve the
posterior probability for each class; then the query sample is
classified to the class with the greatest posterior probability.
This approach provides a simple mechanism for estimating
the probability of the query sample attached to each class
and has been shown to present several advantages. Through
tuning the neighborhood size and the local distribution
model, it can be applied to various datasets and achieve
good performances.

In our experiments we have investigated the properties
of the proposed LD-kNN methods. The results of Experiment
I indicates that LD-kNN(GME) usually favors a
reasonably larger neighborhood size and can achieve an
effective and relatively stable performance in most cases.
Experiment II shows that LD-kNN(GME) can be applied
to a high-dimensional dataset as the NBC and SVM methods;
Experiment III indicates the efficiency of LD-kNN; its
time consumption would not increase too much with the
expansion of the dataset or the neighborhood. Experiment
IV demonstrates the effectiveness and robustness of LD-kNN(GME) to real classification problems. The experimental
results demonstrate the advantages of LD-kNN and
show its potential superiority in a variety of research area.

\section*{Acknowledgment}
This work was supported by the National Basic Research Program of China (2014CB744600), the National Natural Science Foundation of China (61402211, 61063028 and 61210010) and the International Cooperation Project of Ministry of Science and Technology (2013DFA11140).


\bibliographystyle{IEEEtran}
\bibliography{LD}

\end{document}